\documentclass[twoside]{article}

\usepackage[preprint]{aistats2027}
\usepackage[utf8]{inputenc}
\usepackage[T1]{fontenc}
\usepackage{amsmath, amssymb, mathtools}
\usepackage{amsthm}
\newtheorem{proposition}{Proposition}
\usepackage{bm}
\usepackage{dsfont}
\usepackage{float}
\usepackage{graphicx}
\usepackage{booktabs}
\usepackage{enumitem}
\usepackage{microtype}
\usepackage{subcaption}
\usepackage{nicefrac}
\usepackage{xcolor}
\usepackage[round]{natbib}

\usepackage{xurl}
\usepackage[hidelinks]{hyperref}
\usepackage{todonotes}
\usepackage{siunitx}

\renewcommand{\u}{\mathbf{u}}
\let\oldv\v
\renewcommand{\v}{\mathbf{v}}
\newcommand{\y}{\mathbf{y}}

\newcommand{\x}{\mathbf{x}}

\newcommand{\z}{\mathbf{z}}
\newcommand{\I}{\mathbf{I}}

\newcommand{\yobs}{\mathbf{y}_{\mathrm{obs}}}

\let\oldc\c
\renewcommand{\c}{\mathbf{c}}

\newcommand{\thetab}{\bm{\theta}}

\newcommand{\epsilonb}{\bm{\epsilon}}

\newcommand{\psib}{\bm{\psi}}

\newcommand{\0}{\bm{0}}

\newcommand{\R}{\mathbb{R}}

\newcommand{\postobs}{p(\thetab \mid \yobs)}

\newcommand{\normal}{\mathcal{N}}
\newcommand{\uniform}{\mathcal{U}}
\newcommand{\given}{\mid}
\newcommand{\E}{\mathbb{E}}
\newcommand{\Var}{\operatorname{Var}}

\newcommand{\loss}{\mathcal{L}}

\newcommand{\diff}{\,\mathrm{d}}

\newcommand{\flowmap}{\Phi_{\psib}}
\newcommand{\refsubfigure}[1]{#1}

\begin{document}

%

%


\twocolumn[

\aistatstitle{Conditional Mean Flows: Few-Step Densities for Simulation-Based Inference}

\aistatsauthor{ Jonas Arruda \And Jan Hasenauer \And Stefan T.~Radev }

\aistatsaddress{ University of Bonn \And  University of Bonn \And Rensselaer Polytechnic Institute } ]

\begin{abstract}
Flow matching has emerged as a powerful generative family for simulation-based inference (SBI). 
However, sampling and density evaluation require numerical integration at inference time. 
This makes posterior sampling in neural posterior estimation costly. More importantly, it can make repeated likelihood evaluation prohibitively expensive and impractical for Markov chain Monte Carlo (MCMC) sampling, which may require thousands of repeated density evaluations.
Here, we develop conditional mean flows for posterior and likelihood approximation, combining few-step sampling with tractable density evaluation.
We show that the mean flow objective identifies conditional flow maps and that independent masking enables direct estimation of marginal likelihoods. We evaluate both posterior and likelihood estimation across several tasks and show that conditional mean flows retain the flexibility of flow-based density estimators while significantly reducing inference time. In particular, single-step conditional mean flow density evaluation yields estimation error comparable to 200-step flow matching while enabling about 200-fold faster downstream MCMC sampling.
\end{abstract}

\section{INTRODUCTION}
\label{sec:introduction}

Mechanistic simulators are used to describe processes ranging from cognitive science \citep{ratcliff2016diffusion} to disease modeling \citep{ye2025integrating}. Yet, while practically important, their likelihoods are often unavailable in closed form.
Simulation-based inference \citep[SBI;][]{sisson2018handbook, cranmer2020frontier, deistler2025simulation-based, arruda2025diffusion} addresses this challenge by using simulated parameter-observation pairs $(\thetab, \y)$ to infer unknown parameters, and amortized methods reuse the trained network across datasets \citep{buerkner2023some}.

Two popular forms of SBI are neural posterior estimation (NPE) and neural likelihood estimation (NLE). NPE learns a posterior model $q_{\psib}(\thetab\given\y)$ which can sample from the approximate posterior directly.
NLE, instead, learns a likelihood model $q_{\psib}(\y\given\thetab)$ which requires Markov chain Monte Carlo (MCMC) to sample from the posterior.
NPE amortizes over datasets but is usually tied to the prior and data format used in training, whereas NLE needs MCMC for every dataset, but allows the prior to be changed after training and composes exchangeable likelihoods across any number of i.i.d.\ observations \citep{cranmer2020frontier}.

Many SBI methods demand neural estimators that are both flexible and inexpensive to evaluate.
In NLE, MCMC requires thousands of likelihood evaluations per posterior, making each evaluation a practical bottleneck.
Further, hierarchical approaches \citep{arruda2024amortized} or self-consistency objectives that improve NPE \citep{schmitt2024leveraging, mishra2026robust} and enable model comparison \citep{kucharsky2026amortized} rely on density evaluation during training or inference.

So far, normalizing flows have been a standard choice in these applications because they evaluate densities in a single pass \citep{papamakarios2021normalizing, dinh2017density, durkan2019neural}.
However, their transformations must remain invertible with tractable Jacobians, which restricts how inductive biases can be built into the architecture \citep{chen2025conditional}.
Autoregressive flows based on causal transformers (e.g., TARFlow and STARFlow) relax this restriction \citep{zhai2025normalizing, gu2025starflow}, but impose a fixed variable ordering and require sequential inversion for sampling. Moreover, tractable joint densities do not generally imply tractable marginals, which are required with missing observations \citep{yalavarthi2024marginalization}.

Flow matching and diffusion models instead learn a velocity or score field with an unconstrained neural network \citep{lipman2023flow, liu2022flow, wildberger2023flow, sharrock2024sequential, arruda2025diffusion}, and diffusion transformers can flexibly represent arbitrary conditionals and marginals \citep{gloeckler2024all-in-one}.
Yet, their densities require solving an ordinary differential equation (ODE), which can take hundreds of network evaluations and makes repeated likelihood calls during training or within MCMC costly.
Few-step generative models reduce the sampling cost, but consistency models, for instance, provide no tractable density \citep{song2023consistency, schmitt2024consistency}.
Thus, \textit{current SBI approaches either restrict the network architecture or make every density evaluation expensive}.

To overcome this limitation, we present conditional mean flows for SBI, which learn the average velocity between two times, so that one network evaluation defines a finite-time map based on mean flows \citep{geng2025mean, geng2026improved}.
We first evaluate direct posterior sampling and density estimation, then focus on learned likelihoods for compositional inference, missing observations, and gradient-based and delayed acceptance MCMC.
Concretely, we develop and showcase
\begin{enumerate}
    \item a conditional mean flow formulation for posterior and likelihood approximation, with exact stepwise Jacobian determinants, and a proof that the mean flow objective suffices to identify conditional flow maps (\autoref{sec:flow-densities}, Proposition~\ref{prop:conditional});
    \item marginal density estimation for missing targets with a masked transformer backbone, and a proof that the masked objective targets the correct marginal likelihood (\autoref{sec:backbone}, Proposition~\ref{prop:marginal});
    \item and a benchmark against normalizing flows and regular flow matching on four tasks covering both NLE and NPE, including a single-trial likelihood comparison in which one mean flow step matches the error of 200-step flow matching with two orders of magnitude faster sampling (\autoref{sec:experiments});
\end{enumerate}

\begin{figure*}[t]
    \centering
    \includegraphics[width=\textwidth]{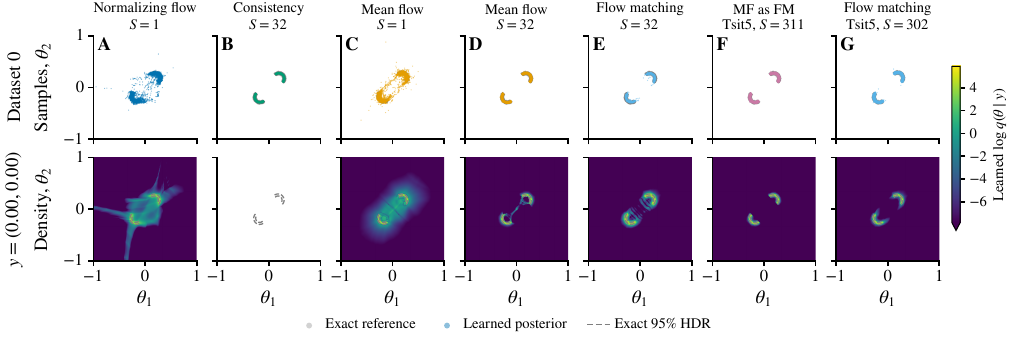}
    \caption{\emph{Learned two-moons posteriors as samplers and as densities.}
    Top row: 5{,}000 draws per method for the canonical observation $\y = (0,0)$; bottom row: the learned log-density on a $512{\times}512$ grid with the exact 95\% highest-density region dashed.
    (\textbf{A})~A normalizing flow; 
    (\textbf{B})~a consistency model with $S{=}32$ steps, which has no tractable density; 
    (\textbf{C})~a mean flow with $S{=}1$.
    (\textbf{D})~The same mean flow with $S{=}32$ and (\textbf{E})~a separately trained flow matching model integrated with 32 Euler steps.
    (\textbf{F})~The velocity head of the mean flow and (\textbf{G})~the flow matching model, both integrated with the adaptive Tsit5 ODE solver.}
    \label{fig:overview}
\end{figure*}

\section{BACKGROUND AND RELATED WORK}
\label{sec:background}

\paragraph{Simulation-based inference.}
For conditionally independent observations $\y_n$, simulation-based inference targets $\postobs \propto p(\thetab)\, \prod_{n=1}^{N} p(\y_n \given \thetab)$ when the likelihood is unavailable.
Classical approximate Bayesian computation compares simulated and observed data directly, but becomes less effective as dimension increases \citep{sisson2018handbook}.
Neural posterior estimation (NPE) fits $q_{\psib}(\thetab\given\y)$ to simulated pairs and draws approximate posterior samples without running MCMC for each dataset.
Neural likelihood estimation (NLE), instead, fits a conditional density to prior-predictive pairs and uses it within MCMC \citep[for an overview, see][]{cranmer2020frontier}.
A per-observation estimator can be reused across dataset sizes, making single-trial likelihoods useful for models with exchangeable trials \citep{boelts2022flexible}, and spline coupling flows are the standard tractable-density baseline~\citep{durkan2019neural}.

\paragraph{Flow matching and diffusion-based inference.}
Flow matching learns a velocity field $\v(\z_t, t)$ that transports a Gaussian distribution to the target along a prescribed interpolation \citep{lipman2023flow, liu2022flow}.
It has been applied to both NPE and NLE, alongside score-based generative models \citep[for a review, see][]{arruda2025diffusion}.
Sampling requires ODE integration, and density evaluation additionally integrates the velocity divergence through the continuous change-of-variables formula \citep{chen2018neural}, so repeated integration can dominate the cost of NLE.

\paragraph{Few-step generative models.}
Consistency models learn transitions from intermediate noisy states toward data, enabling generation in one or a few steps \citep{song2023consistency}.
They also support fast posterior sampling in SBI \citep{schmitt2024consistency, arruda2025diffusion}, although the standard formulation does not provide a tractable posterior density.
Flow maps learn finite-time ODE transitions between arbitrary time pairs, and mean flows parameterize these maps through average velocities and can be trained without distillation \citep{geng2025mean, geng2026improved}.
\citet{rehman2026falcon} combined mean flows with a cycle loss that encourages invertibility to obtain few-step densities for Boltzmann generators.

\paragraph{MCMC with learned and multi-fidelity likelihoods.}
The no-U-turn sampler \citep[NUTS;][]{hoffman2014no-u-turn, betancourt2017conceptual} uses density and gradient evaluations to construct proposals, and probabilistic programming frameworks allow learned likelihoods to enter these calculations as model components \citep{carpenter2017stan}.
Delayed acceptance MCMC uses a cheap approximation to screen proposals while leaving the fine target invariant \citep{christen2005markov}.
Moreover, its multilevel variant organizes several approximations \citep{lykkegaard2023multilevel}, and related approaches in SBI use simulators at different resolutions \citep{prescott2021multifidelity, tatsuoka2026bifidelity}.

\section{METHOD}
\label{sec:method}

\paragraph{A note on notation.}
We write $\z$ for the density target and $\x$ for the conditioning variables.
In \emph{posterior estimation}, the network learns $q_{\psib}(\thetab\given\y)$ with parameters $\z\equiv\thetab$ as the target and data $\x\equiv\y$ as the condition.
In \emph{likelihood estimation}, it learns $q_{\psib}(\y\given\thetab)$ with $\z\equiv\y$ and $\x\equiv\thetab$.
The same formulation applies to either direction by choosing the target and condition accordingly.

\subsection{Flow-Based Density Evaluation}
\label{sec:flow-densities}

\paragraph{Flow matching vs.~normalizing flows.} We consider flows that transport the target distribution $p(\z \given \x)$ to a spherical Gaussian base distribution.
Time runs from data to noise: $t=0$ is the clean target and $t=1$ is pure noise, along the flow matching interpolation $\z_t = (1-t)\,\z_0 + t\,\epsilonb$ with $\epsilonb \sim \normal(\0, \I)$.
Flow matching for SBI learns the marginal velocity $\v_{\psib}(\z_t, t, \x)$ by regressing the conditional velocity $\epsilonb - \z_0$. For sampling, it draws the initial condition $\z_1\sim\normal(\0, \I)$ and then integrates $\mathrm{d}\z_t = \v_{\psib}\diff t$ from $t{=}1$ to $t{=}0$ \citep{wildberger2023flow}.
The density follows from the continuous change of variables,
\begin{equation}
    \log q_{\psib}(\z_0 \given \x) = \log \normal(\z_1; \0, \I) + \int_0^1 \nabla\!\cdot \v_{\psib}(\z_t, t, \x) \diff t,
    \label{eq:cnf}
\end{equation}
which can be integrated with any ODE solver \citep{arruda2025diffusion}.
Normalizing flows instead learn an invertible map $f_{\psib}(\cdot\,; \x)$ from the target to the base distribution and evaluate
\begin{equation}
    \begin{split}
    \log q_{\psib}(\z \given \x) ={}& \log \normal\big(f_{\psib}(\z; \x); \0, \I\big) \\
    &+ \log \big|\det \nabla_{\z} f_{\psib}(\z; \x)\big|
    \end{split}
    \label{eq:nf}
\end{equation}
in a single pass \citep{papamakarios2021normalizing}.
To keep the determinant tractable, $f_{\psib}$ is composed of invertible layers with triangular Jacobians \citep{dinh2017density, durkan2019neural}, which avoids the integral in \eqref{eq:cnf} but restricts the neural network architecture~(\autoref{fig:overview}).

\paragraph{Average velocity and the flow map.}
Extending \citet{geng2025mean} to conditional models, we train a network to predict the \emph{average velocity}
\begin{equation}
    \u_{\psib}(\z, s, t, \x) \approx \frac{1}{t-s} \int_s^t \v(\z_\tau, \tau, \x)\diff\tau ,
    \label{eq:average-velocity}
\end{equation}
the mean of the instantaneous field along the ODE trajectory started at $\z_s = \z$ conditioned on $\x$.
Since $\x$ is constant along the trajectory, the conditional velocity defines one ODE per condition.
One network evaluation then applies the finite-time \emph{flow map}
\begin{equation}
    \flowmap(\z; s, t, \x) = \z + (t - s)\,\u_{\psib}(\z, s, t, \x),
    \label{eq:flow-map}
\end{equation}
with the boundary case $\u_{\psib}(\z, t, t, \x)$ corresponding to the instantaneous velocity $\v$.

\paragraph{Training with a self-consistent tangent and a cycle loss.}
Differentiating $(t-s)\,\u = \int_s^t \v\diff\tau$ along the trajectory gives the MeanFlow identity \citep{geng2025mean},
\begin{equation}
    \v(\z_s, s, \x) = \u_{\psib}(\z_s, s, t, \x) - (t - s)\, \tfrac{\mathrm{d}}{\mathrm{d} s}\, \u_{\psib}(\z_s, s, t, \x),
    \label{eq:meanflow-identity}
\end{equation}
where $\nicefrac{\mathrm{d}}{\mathrm{d} s} = \v \cdot \nabla_{\z} + \partial_s$ is the total derivative along $(\dot\z_s, \dot s, \dot t) = (\v, 1, 0)$ at fixed $\x$.
Following \citet{geng2026improved}, we use the right-hand side as a prediction of the instantaneous velocity,
\begin{equation}
    \mathbf{V}_{\psib} = \u_{\psib}(\z_s, s, t, \x) - (t - s)\, \operatorname{sg}\!\big[\tfrac{\mathrm{d}}{\mathrm{d} s}\u_{\psib}(\z_s, s, t, \x)\big],
    \label{eq:compound-velocity}
\end{equation}
where $\operatorname{sg}[\cdot]$ denotes the stop-gradient and the total derivative is computed by one Jacobian-vector product with tangent $(\operatorname{sg}[\hat{\v}_{\psib}], 1, 0)$.
Since the marginal velocity $\v$ is unknown, the tangent $\hat{\v}_{\psib}(\z_s, s, \x)$ is predicted by an auxiliary velocity head, which shares all but the last two layers with $\u_{\psib}$.
With $\z_s = (1-s)\,\z_0 + s\,\epsilonb$, both outputs regress onto the conditional velocity,
\begin{equation}
    \loss_{\mathrm{MF}}(\psib) = \E\Big[ w\,\big\|\mathbf{V}_{\psib} - (\epsilonb - \z_0)\big\|^2 + \tfrac{1}{d}\big\|\hat{\v}_{\psib} - (\epsilonb - \z_0)\big\|^2 \Big],
    \label{eq:mf-loss}
\end{equation}
with the adaptive weight $w = \operatorname{sg}\big[(\|\mathbf{V}_{\psib} - (\epsilonb - \z_0)\|^2 + c)^{-p}\big]$ of \citet{geng2025mean} and $d$ the dimension of $\z$ (all hyperparameters are defined in \autoref{app:training}).
The expectation is taken over simulations $(\z_0, \x)$, noise $\epsilonb \sim \normal(\0, \I)$, and independent times $s, t$ drawn from a power-law distribution.
Hence, times are sampled more often closer to the target, similar to \citet{wildberger2023flow}.
For some of the pairs we set $t{=}s$, where $\mathbf{V}_{\psib} = \u_{\psib}(\z_s, s, s, \x)$ and \eqref{eq:mf-loss} reduces to flow matching.
Besides stabilizing training, the velocity head allows us to use the trained mean flow as a flow matching model.
To encourage invertibility of the flow map, we add the cycle loss of \citet{rehman2026falcon}, which penalizes the discrepancy between a state and its image after mapping $s \to t$ and back,
\begin{equation}
    \loss_{\mathrm{cyc}}(\psib) = \E\Big[\tfrac{1}{d}\big\|\z_s - \flowmap\big(\flowmap(\z_s;\, s, t, \x);\, t, s, \x\big)\big\|^2\Big],
    \label{eq:cycle-loss}
\end{equation}
and minimize $\loss_{\mathrm{MF}} + \lambda\,\loss_{\mathrm{cyc}}$ with the cycle weight $\lambda > 0$.
We further replace some $\x$ in \eqref{eq:cycle-loss} by the condition of another simulation, to encourage $\flowmap$ to be invertible for every condition and to make the map robust to any MCMC proposal.

\paragraph{Exact densities from composed steps.}
We fix a schedule $0 = t_0 < t_1 < \dots < t_S = 1$ and let $G_i(\z) = \flowmap(\z;\, t_{i-1}, t_i, \x)$ be the step maps toward noise.
With $\z^{(0)} = \z$ and $\z^{(i)} = G_i(\z^{(i-1)})$, the change-of-variables expression is
\begin{equation}
\begin{aligned}
    \log q_{\psib}^{(S)}(\z \given \x)
    &= \log \normal\big(\z^{(S)}; \0, \I\big) \\
    &\quad + \sum_{i=1}^{S} \log \big|\det \nabla_{\z} G_i\big(\z^{(i-1)}\big)\big|,
\end{aligned}
\label{eq:composed-density}
\end{equation}
where each Jacobian can be computed via autodiff.
If the composed map $T = G_S \circ \dots \circ G_1$ is a diffeomorphism of $\R^d$, the chain rule gives $\det \nabla_{\z} T = \prod_{i} \det \nabla_{\z} G_i$, so \eqref{eq:composed-density} is the pullback of the base density through $T$.
We place the nodes by an EDM schedule \citep{karras2022elucidating},
$t_i = (\sigma_i - \sigma_{\min})/(1 - \sigma_{\min})$ with $\sigma_i = \big(\sigma_{\min}^{1/\rho} + \tfrac{i}{S}(1 - \sigma_{\min}^{1/\rho})\big)^{\rho}$,
which concentrates the steps near the data for $\rho > 1$ and has been shown to outperform other schedules for few-step density evaluation~\citep{rehman2026falcon}.

\paragraph{Conditioning.}
Although the mean flow only receives $\x$ as an additional input and regresses the same target $\epsilonb - \z_0$ as an unconditional model, we show that it learns the correct conditional flow map.
Similar to conditional diffusion models \citep{vincent2011connection, batzolis2021conditional, arruda2025diffusion}, the condition only restricts which targets $\z_0$ contribute to the regression.

\begin{proposition}[Conditional mean flows]
\label{prop:conditional}
Under the assumptions of Appendix~\ref{app:conditional}, every population minimizer of the unweighted loss $\loss_{\mathrm{MF}}$ satisfies $\u_{\psib}=\u^*$ for $p(\x)$-almost every $\x$, where $\u^*$ is the average velocity \eqref{eq:average-velocity} of $\v^*(\z,s,\x)=\E[\epsilonb-\z_0\given\z_s=\z,\x]$.
Hence, \eqref{eq:composed-density} equals $\log p(\z \given \x)$ for every schedule.
\end{proposition}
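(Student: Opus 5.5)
We treat $\x$ as a fixed parameter and reduce the claim to the unconditional MeanFlow argument, applied condition by condition. First, we disintegrate the expectation in the unweighted loss $\loss_{\mathrm{MF}}$ (with $w\equiv 1$) as $\E_{\x}\big[\E[\,\cdot \given \x]\big]$. The network output at a fixed $\x$ only enters the inner expectation, and the assumptions of Appendix~\ref{app:conditional} (enough expressivity, square integrability, and a time distribution with full support on pairs $(s,t)$) let us minimize the inner term separately for $p(\x)$-almost every $\x$. Changing the minimizer on a $p(\x)$-null set does not change the loss, which is why the statement holds only almost everywhere.

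For fixed $\x$, we handle the velocity head first. The term $\tfrac1d\|\hat{\v}_{\psib}-(\epsilonb-\z_0)\|^2$ is a least-squares regression. By the standard bias--variance decomposition
\[
\E\|f-Y\|^2=\E\|f-\E[Y\given \z_s,\x]\|^2+\text{const},
\]
it is uniquely minimized, almost everywhere in $(\z,s)$, by $\hat{\v}=\v^*(\z,s,\x)=\E[\epsilonb-\z_0\given\z_s=\z,\x]$. The condition enters only through the posterior over $\z_0$ given $(\z_s,\x)$, so $\v^*$ is the marginal velocity of the interpolation started from $p(\z_0\given\x)$. The same decomposition applied to the main term shows that $\mathbf{V}_{\psib}$ must equal $\v^*$ for every $t$. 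By the usual flow-matching argument, $\v^*(\cdot,\cdot,\x)$ generates the probability path from $p(\z\given\x)$ at $s=0$ to $\normal(\0,\I)$ at $s=1$.

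The main obstacle is the step from $\mathbf{V}_{\psib}=\v^*$ to $\u_{\psib}=\u^*$. The stop-gradient does not change the population stationary point, since at the optimum the target equals the prediction. The tangent uses $\hat{\v}=\v^*$, so the condition reads
\[
\u(\z,s,t,\x)-(t-s)\big(\v^*\cdot\nabla_{\z}\u+\partial_s\u\big)=\v^*(\z,s,\x).
\]
Set $g(s)=(t-s)\,\u(\z_s,s,t,\x)$ along a characteristic $\dot\z_s=\v^*$. The condition becomes $\tfrac{\mathrm{d}}{\mathrm{d}s}g(s)=-\v^*(\z_s,s,\x)$ with terminal value $g(t)=0$. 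Integrating gives
\[
(t-s)\,\u=\int_s^t \v^*(\z_\tau,\tau,\x)\diff\tau,
\]
which is exactly \eqref{eq:average-velocity}. Uniqueness follows from well-posedness of this linear transport ODE, which requires $\v^*$ to be Lipschitz in $\z$ as assumed in the appendix. There is one delicate point: the identity holds only almost everywhere in $\z$. We would first obtain the equality $p_s$-a.e., then extend it to all $\z$ by continuity of $\u_{\psib}$ together with positivity of the Gaussian-smoothed path density for $s>0$.

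Finally, with $\u_{\psib}=\u^*$, every step map $G_i$ is the exact ODE flow map of $\v^*$ from $t_{i-1}$ to $t_i$, hence a diffeomorphism. The composition $T$ is then the full flow from $t=0$ to $t=1$ and pushes $p(\z\given\x)$ to $\normal(\0,\I)$. The chain-rule remark after \eqref{eq:composed-density} then shows that \eqref{eq:composed-density} is the pullback of the base density through $T$, which equals $\log p(\z\given\x)$ for any schedule.
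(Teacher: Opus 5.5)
Your proof is correct and follows the paper's argument essentially step by step. The shared steps are the conditional-expectation decomposition forcing $\hat{\mathbf{v}}_{\bm{\psi}}=\mathbf{V}_{\bm{\psi}}=\mathbf{v}^*$, the extension from almost everywhere to everywhere via Gaussian full support at interior times and continuity, and integration along the characteristics of $\mathbf{v}^*$. Your $g(s)=(t-s)\,\mathbf{u}$ is just $\Phi_{\bm{\psi}}(\mathbf{z}_s;s,t,\mathbf{x})-\mathbf{z}_s$, and the paper phrases the same computation as $\Phi_{\bm{\psi}}(\mathbf{z}_s;s,t,\mathbf{x})$ being constant along the trajectory.

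The one step to state explicitly is that the two regression terms are coupled through the tangent, so you cannot minimize them separately without justification. It is realizability (A4) together with the MeanFlow identity for the exact pair $(\mathbf{u}^*,\mathbf{v}^*)$ that guarantees both lower bounds are attained simultaneously. Your stop-gradient remark is unnecessary, since $\operatorname{sg}$ does not change the loss value and hence not the set of population minimizers.
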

The proof is given in Appendix~\ref{app:conditional}.

\subsection{Sampling and Posterior Inference}
\label{sec:sampling}

For sampling, we draw $\z^{(S)} \sim \normal(\0, \I)$ and apply the flow map backward along the same schedule, $\z^{(i-1)} = \flowmap(\z^{(i)};\, t_i, t_{i-1}, \x)$ for $i = S, \dots, 1$, so that $S$ steps yield a sample $\z^{(0)}$.
In the posterior direction, these are parameter draws conditional on the observed data; in the likelihood direction, they are observation draws conditional on the parameters.
These samples follow the density in \eqref{eq:composed-density} only if the backward maps invert the forward maps $G_i$, which the cycle loss encourages.

In the likelihood direction, the joint log-posterior of $N$ i.i.d.\ observations is proportional to $\log p(\thetab) + \sum_{n=1}^N \log q_{\psib}^{(S)}(\y_n \given \thetab)$, which is differentiable in $\thetab$ for any fixed $S$ with nonsingular step Jacobians.
We expose it to \texttt{PyMC} as a distribution  \citep{salvatier2016probabilistic} and sample with \texttt{nutpie}'s NUTS on the \texttt{JAX} backend \citep{seyboldt2026preconditioning}.

Additionally, we can construct a two-level sampler as a trained mean flow provides a coarse likelihood $q_{\psib}^{S_c}$ with few steps and a fine likelihood $q_{\psib}^{\mathrm{f}}$ by integrating its velocity head with an accurate ODE solver \eqref{eq:cnf}.
This defines a coarse and a fine posterior,
\begin{equation}
    \begin{aligned}
    \pi_{\mathrm{c}}(\thetab) &\propto p(\thetab) \textstyle\prod_{n=1}^N q_{\psib}^{S_c}(\y_n \given \thetab), \\
    \pi_{\mathrm{f}}(\thetab) &\propto p(\thetab) \textstyle\prod_{n=1}^N q_{\psib}^{\mathrm{f}}(\y_n \given \thetab),
    \end{aligned}
    \label{eq:two-level}
\end{equation}
which we combine in two-level delayed acceptance MCMC \citep{christen2005markov, lykkegaard2023multilevel}.
From the current state $\thetab$, a subchain of $J$ random-walk Metropolis steps targeting $\pi_{\mathrm{c}}$ yields a proposal $\thetab'$, which is accepted with probability
\begin{equation}
    \alpha(\thetab, \thetab') = \min\left\{1,\, \frac{\pi_{\mathrm{f}}(\thetab')\, \pi_{\mathrm{c}}(\thetab)}{\pi_{\mathrm{f}}(\thetab)\, \pi_{\mathrm{c}}(\thetab')}\right\}.
    \label{eq:delayed-acceptance}
\end{equation}
Since the subchain is reversible with respect to $\pi_{\mathrm{c}}$, this correction makes $\pi_{\mathrm{f}}$ the invariant distribution \citep{lykkegaard2023multilevel}, while the fine likelihood is evaluated only once per subchain instead of at every proposal.

\subsection{Masked Transformers for Marginal Densities}
\label{sec:backbone}

Inspired by the Simformer \citep{gloeckler2024all-in-one}, we use masked attention to learn marginal densities $p(\z_{\mathcal O}\given\x)$ for subsets $\mathcal O$ of the target coordinates.
Our transformer tokenizes each coordinate of $\z$ and $\x$ and receives $(s,s-t)$ through adaptive layer normalization \citep{peebles2023scalable}.
The mask excludes omitted target tokens, so outputs depend on $\z$ only through $\z_{\mathcal O}$; the mean-flow and cycle losses are restricted to $\mathcal O$.

\begin{proposition}[Marginalization by masking]
\label{prop:marginal}
Under the assumptions of Appendix~\ref{app:marginal}, every population minimizer of the unweighted masked loss satisfies $\u_{\psib,\mathcal O}=\u^*_{\mathcal O}$ for every training mask $\mathcal O$ and $p(\x)$-almost every $\x$, where $\u^*_{\mathcal O}$ is the average velocity of $\v^*_{\mathcal O}(\z_{\mathcal O},s,\x)=\E[\epsilonb_{\mathcal O}-\z_{0,\mathcal O}\given\z_{s,\mathcal O}=\z_{\mathcal O},\x]$.
Hence, \eqref{eq:composed-density} in $|\mathcal O|$ dimensions equals $\log p(\z_{\mathcal O}\given\x)$ for every schedule.
\end{proposition}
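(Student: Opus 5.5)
The plan is to reduce Proposition~\ref{prop:marginal} to Proposition~\ref{prop:conditional}. For each fixed training mask $\mathcal O$, the masked problem is a conditional mean flow problem in $|\mathcal O|$ dimensions with target $\z_{0,\mathcal O}$ and condition $\x$. Concretely, I would treat the mask as an additional discrete conditioning variable with positive training probability $p(\mathcal O)>0$, and write the masked population loss as $\sum_{\mathcal O} p(\mathcal O)\,\loss_{\mathrm{MF}}^{\mathcal O}(\psib)$. Each term is the unweighted loss \eqref{eq:mf-loss} restricted to the coordinates in $\mathcal O$.

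\textbf{Key structural fact.} The attention mask removes omitted target tokens, so the outputs $\u_{\psib,\mathcal O}$ and $\hat{\v}_{\psib,\mathcal O}$ are measurable functions of $(\z_{s,\mathcal O}, s, t, \x, \mathcal O)$ only. The interpolation is coordinatewise:
\[
\z_{s,\mathcal O}=(1-s)\z_{0,\mathcal O}+s\,\epsilonb_{\mathcal O},
\]
and the regression target $\epsilonb_{\mathcal O}-\z_{0,\mathcal O}$ involves only masked-in coordinates. The pair $(\z_{0,\mathcal O},\x)$ has law $p(\z_{0,\mathcal O}\given\x)p(\x)$, which is the marginal of the simulation distribution. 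Hence $\loss_{\mathrm{MF}}^{\mathcal O}$ is exactly the conditional mean-flow loss for the data distribution $p(\z_{\mathcal O}\given\x)$. The assumptions of Appendix~\ref{app:marginal} should guarantee that this marginal satisfies the hypotheses of Appendix~\ref{app:conditional}, namely the integrability and regularity needed for $\v^*_{\mathcal O}$ to generate a well-posed flow. The conditional expectation $\v^*_{\mathcal O}$ in the statement is then precisely the $\v^*$ of Proposition~\ref{prop:conditional} for this sub-problem.

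\textbf{Decoupling across masks.} The difficulty is that one network is shared across masks, so minimizing the sum need not minimize each term. I would handle this as follows.
\begin{enumerate}
\item Assume, as Appendix~\ref{app:marginal} presumably does, that the function class is rich enough that the mask-indexed family of functions can be chosen independently for each $\mathcal O$ (the mask is an input, so a nonparametric class over $(\z_{\mathcal O}, s,t,\x,\mathcal O)$ suffices).
\item Note that each $\loss_{\mathrm{MF}}^{\mathcal O}$ has infimum attained by $(\u^*_{\mathcal O},\v^*_{\mathcal O})$, by Proposition~\ref{prop:conditional}.
\item Conclude that a global minimizer of the weighted sum with $p(\mathcal O)>0$ must minimize every term.
\end{enumerate}
Applying the uniqueness part of Proposition~\ref{prop:conditional} to each term then gives $\u_{\psib,\mathcal O}=\u^*_{\mathcal O}$ for $p(\x)$-a.e.\ $\x$. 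Since there are finitely (or countably) many masks, the exceptional null sets can be combined into one.

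\textbf{Density claim.} The final claim follows verbatim from the density part of Proposition~\ref{prop:conditional} applied in dimension $|\mathcal O|$. The step maps built from $\u^*_{\mathcal O}$ are the exact flow maps of $\v^*_{\mathcal O}$, and $\v^*_{\mathcal O}$ transports $p(\z_{\mathcal O}\given\x)$ to $\normal(\0,\I_{|\mathcal O|})$ because the marginal of a product Gaussian is Gaussian. Their composition is therefore a diffeomorphism pushing $p(\z_{\mathcal O}\given\x)$ to the base distribution, and \eqref{eq:composed-density} returns $\log p(\z_{\mathcal O}\given\x)$.

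\textbf{Main obstacle.} I expect the hardest step to be verifying that the masked network really computes the same stop-gradient tangent $\frac{\mathrm{d}}{\mathrm{d}s}\u_{\mathcal O}$ along the $|\mathcal O|$-dimensional ODE. This requires that the Jacobian-vector product involves only $\nabla_{\z_{\mathcal O}}$ and the tangent $\hat{\v}_{\mathcal O}$. Masking guarantees this, since $\nabla_{\z_{\bar{\mathcal O}}}\u_{\mathcal O}=0$. With that in hand, the fixed-point argument of Proposition~\ref{prop:conditional} carries over unchanged.
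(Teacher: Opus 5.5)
Your proposal is correct and follows the paper's proof. The paper likewise writes the masked loss as a positively weighted sum of per-mask conditional mean flow objectives for the marginal targets $\z_{0,\mathcal O}$, uses simultaneous realizability (A4) so that the exact pairs attain every summand's lower bound (forcing any minimizer to attain each one), and then applies Proposition~\ref{prop:conditional} to each marginal target. Your remarks that the Jacobian-vector product involves only $\nabla_{\z_{\mathcal O}}$, and that the finitely many null sets can be merged, are details the paper leaves implicit.
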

The proof is given in Appendix~\ref{app:marginal}.
Choosing $\z=\thetab$ gives marginal posterior densities $p(\thetab_{\mathcal O}\given\y)$; choosing $\z=\y$ gives marginal likelihoods $p(\y_{\mathcal O}\given\thetab)$.
We evaluate the latter for missing observations in \autoref{sec:exp3}.

\section{EXPERIMENTS}
\label{sec:experiments}

We evaluate direct posterior approximation on two moons and likelihood approximation on the three further benchmarks, where the learned densities enter MCMC.
The benchmarks show the following:
\begin{itemize}
    \item \textbf{Two moons}: few steps give accurate samples, so sampling has only a minor speed-accuracy trade-off, and densities improve with more steps~(\autoref{sec:exp1}).
    \item \textbf{Cognitive model}: exchangeable likelihoods yield precision comparable to 200-step flow matching with about $200\times$ faster MCMC~(\autoref{sec:exp2}).
    \item \textbf{Epidemic renewal}: a masked transformer marginalizes missing observations and is more accurate than a normalizing flow~(\autoref{sec:exp3}).
    \item \textbf{g-and-k}: at a shared budget, more draws beat more steps, and delayed acceptance MCMC is as accurate as NUTS without gradients~(\autoref{sec:exp4}).
\end{itemize}
We compare samples with exact references using the classifier two-sample test \citep[C2ST;][]{lopez-paz2017revisiting}, and densities through the Hellinger distance (\autoref{app:metrics}).
Reference-free evaluations use normalized root mean squared error (NRMSE), effective sample size (ESS) and predictive checks, and the calibration for every trained estimator is assessed using tests
of accuracy with random points \citep[TARP;][]{lemos2023sampling-based} prior to any other evaluation (\autoref{tab:tarp}).
Tasks, architectures and generative models are described in \autoref{app:metrics} to~\ref{app:tasks}.

\begin{figure*}[t]
    \centering
    \includegraphics[width=\textwidth]{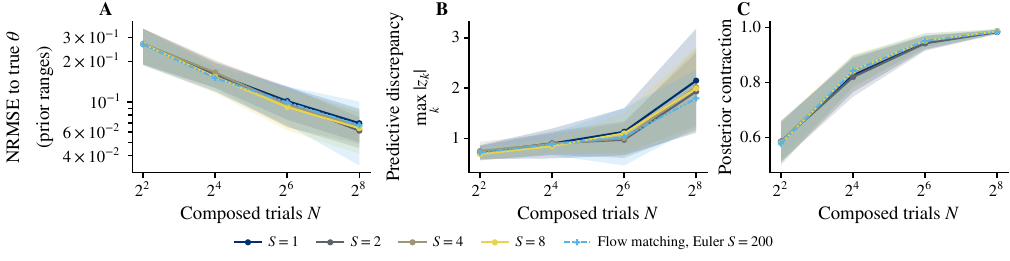}
    \caption{\emph{Composing one learned trial likelihood over $N$ trials of a cognitive model.}
    (\textbf{A})~NRMSE of the composed posterior to the generating parameter.
    (\textbf{B})~Worst posterior predictive discrepancy across choice proportion, conditional response-time quantiles and timeout rate.
    (\textbf{C})~Posterior contraction relative to the prior.
    Lines show the mean over 10 independent datasets, bands the sample standard deviation.}
    \label{fig:exp4}
\end{figure*}

\subsection{Few Steps Suffice for High Posterior Sample and Density Quality}
\label{sec:exp1}

\paragraph{Setup and baseline.}
The two moons task has a two-dimensional parameter and observation, a bimodal crescent posterior, and an exact likelihood and posterior sampler \citep{greenberg2019automatic, lueckmann2021benchmarking}.
All methods learn the \emph{posterior} direction and are sampled directly with 5{,}000 draws per dataset, which we score against 32{,}000 exact reference draws.
We compare a conditional mean flow at step counts $S \in \{1, 2, 4, 8, 16, 32\}$ with three baselines: a conditional normalizing flow, a conditional consistency model as the established few-step method \citep{song2023consistency, schmitt2024consistency}, and a separately trained conditional flow matching model integrated with $S$ Euler steps or an adaptive Tsit5 solver \citep{tsitouras2011runge--kutta}.
We integrate the velocity head of the mean flow with the same Euler or adaptive scheme, which repurposes the mean flow as a flow matching model.
For both the consistency model and the mean flow, we tuned the sampling schedule for each step count on held-out observations, which are disjoint from the evaluation datasets.

\paragraph{Sample quality.}
The conditional mean flow reaches a similar C2ST at every step count (\autoref{fig:overview} and \autoref{fig:exp1-steps}\refsubfigure{A}), so a single step suffices for sampling and already outperforms the normalizing flow ($0.523$ against $0.543$).
In contrast, the consistency model needs at least 4 steps to approach this accuracy, and both flow matching variants need at least 32 Euler steps.
The adaptive solver further improves the accuracy, but requires more than 300 solver steps.
Hence, the mean flow with $S{=}1$ draws 5{,}000 samples about 23 times faster than flow matching with 32 Euler steps and 140 times faster than with the adaptive solver (\autoref{fig:runtimes}).
It is also about 6 times faster than the normalizing flow and only reaches its sampling time at $S{=}8$.

Since the velocity head of the mean flow and the separately trained model perform similarly under both solvers, mean flow training does not seem to degrade the learned velocity field.
The consistency model is also sensitive to its discretization exponent, whose candidates span a C2ST from $0.54$ to $0.98$ at $S{=}4$, whereas the EDM schedule of the mean flow changes the C2ST by at most $0.015$~(\autoref{tab:schedule}).

\paragraph{Density quality.}
In contrast to sampling, we find that additional mean flow steps improve the density, and the Hellinger distance to the exact posterior decreases with $S$ (\autoref{fig:exp1-steps}\refsubfigure{B} and \autoref{tab:exp1-density}).
With $S{=}1$, the mean flow places mass between the two modes (\autoref{fig:overview}\refsubfigure{C}), but it still outperforms the normalizing flow.
Euler integration of either velocity field remains less accurate than the mean flow with $S{\ge}2$ up to 32 steps.
The adaptive solver is most accurate, with the velocity head of the mean flow ahead of the separately trained flow matching model.

Moreover, the mean flow density is close to normalized, whereas the Euler densities overshoot unit mass by many orders of magnitude at $S{=}1$ and remain far from normalized at $S{=}32$ (\autoref{tab:exp1-density}).
We attribute this to the Euler scheme approximating both the trajectory and the divergence integral in \eqref{eq:cnf}, whereas the mean flow computes the exact Jacobian determinant of each step.
The normalizing flow is exactly normalized, but its density shows fold artifacts and places mass outside the prior support~(\autoref{fig:overview}\refsubfigure{A}).
Flow matching models with adaptive solvers provide the most accurate densities, but more than 300 steps per evaluation make them too expensive for repeated density evaluation.

\subsection{Single-Step Likelihoods Yield Error Comparable to 200-Step Flow Matching}
\label{sec:exp2}

\paragraph{Setup and baseline.}
The cognitive model describes two-alternative decisions as a Wiener first-passage process with drift $\nu$, boundary separation $a$ and non-decision time $t_0$ \citep{ratcliff2016diffusion}, and is a standard benchmark for learned single-trial likelihoods \citep{fengler2021likelihood}.
We train one conditional mean flow on the signed response time $y = (2c-1)\,t$, with choice $c \in \{0, 1\}$ and response time $t$, and compose the learned trial likelihood into joint posteriors over $N \in \{4, 16, 64, 256\}$ trials, which we sample with NUTS.
Since a response cannot be faster than $t_0$, we truncate the prior of $t_0$ at the fastest observed response time of each run.
We compare mean flow step counts $S \in \{1, 2, 4, 8\}$ with a flow matching model integrated with 200 Euler steps.

\paragraph{Results.}
With $S{=}1$, the NRMSE and posterior contraction of the mean flow are comparable to the flow matching baseline at every $N$ (\autoref{fig:exp4}, \autoref{tab:exp4} and \autoref{fig:exp4-posteriors}), while sampling is 182 to $216\times$ faster (\autoref{fig:runtimes}\refsubfigure{D}).
Additional steps increase the cost and the number of convergence failures without clearly reducing the parameter error (\autoref{fig:exp4-steps}).
From all MCMC runs, $8.5\%$ failed to converge, and all failed runs used $S \ge 4$ (\autoref{tab:exp4}).
We attribute this to the fact that the mean flow is trained for few-step sampling and evaluation only, and hence the intermediate maps might be poorly trained and additional steps, unlike in the discretization of an ODE, need not reduce the approximation error.

However, for every method the coverage of the 95\% central interval decreases and the worst posterior predictive discrepancy increases with $N$ (\autoref{fig:exp4}\refsubfigure{B}).
Composing many approximate likelihoods accumulates their errors while the posterior contracts, which makes even the 200-step baseline overconfident.
With $S{=}1$, the additional discretization error further lowers the coverage at $N{=}256$, whereas with $S{=}2$ coverage and predictive discrepancy are comparable to the baseline.

\subsection{Masked Transformer Backbone Reduces Error under Missing Observations}
\label{sec:exp3}

\begin{figure*}[t]
    \centering
    \includegraphics[width=.95\textwidth]{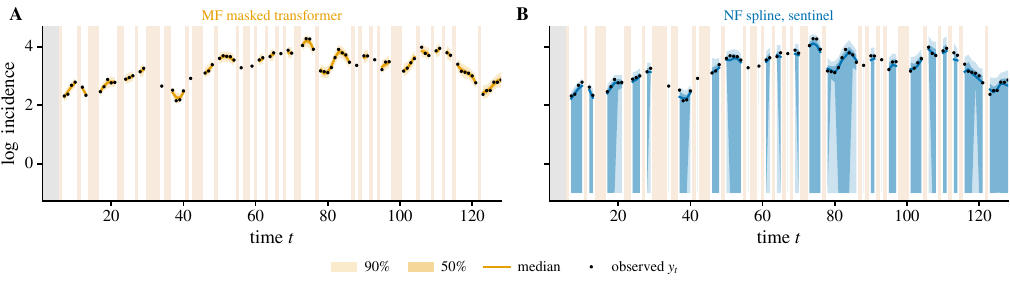}
    \caption{\emph{Posterior predictive simulations of the renewal model on one dataset.}
    Observations $\tilde{y}$ simulated from the learned likelihood at posterior draws of (\textbf{A})~our conditional mean flow with the masked transformer and (\textbf{B})~the normalizing flow with sentinel values; median with 50 and 90\% bands against the observed data (black).
    The gray strip is the seeded window, and the striped columns mark the missing time points.}
    \label{fig:exp3_prediction}
\end{figure*}

\begin{table}[t]
    \caption{\emph{Likelihoods with missing observations.}
    Mean (sample standard deviation) over 10 datasets. $\tilde{y}_t$ is simulated with the learned likelihood.}
    \label{tab:exp3}
    \centering
    \small
    \begin{tabular}{lll}
        \toprule
        & \textbf{Mean flow} & \textbf{Normalizing flow}\\
        \midrule
        NRMSE & $\mathbf{0.187}$ (0.008) & 0.199 (0.009) \\
        RMSE $\tilde{y}_t$ & $\mathbf{0.055}$ (0.007) & 0.159 (0.036) \\
        RMSE $y_t$ & $\mathbf{0.060}$ (0.013) & 0.125 (0.031) \\
        ESS & $\mathbf{4{,}234}$ (247)	& 2{,}700 (629) \\
        Leapfrog steps & $\mathbf{121}$ (4) & 480 (57) \\
        Wall time (s) & 36{,}146 (6{,}157) & $\mathbf{1{,}624}$ (285) \\
        \bottomrule
    \end{tabular}
\end{table}

\paragraph{Setup and baseline.}
Since conditional mean flows do not constrain the network architecture, we can tailor the backbone to observations with missing entries.
We consider a stochastic renewal process, a standard forward model for estimating time-varying reproduction numbers $R_t$ \citep{fraser2007estimating, cori2013new, gostic2020practical}, in which the incidence is driven by a generation-interval force (Appendix~\ref{app:task-renewal} and \autoref{fig:exp3-scenario}).
The parameter is the 128-dimensional coefficient vector $\z$ with $\log R = L\z$, where $L$ is the Cholesky factor of a squared-exponential kernel \citep{rasmussen2005gaussian}, such that the prior is $\normal(\0, \I)$.
We withhold 30\% of the observed steps completely at random and redraw this pattern for each dataset.
Under this mechanism, inference only requires the marginal likelihood $p(\y_{\mathcal{O}} \given \z)$ of the observed entries $\mathcal{O}$ \citep{little2019statistical}.
We parameterize the mean flow with a masked transformer trained with missing coordinates and a causal mask over time.
In contrast, the normalizing flow has a fixed input width, so we train and evaluate it with a value of $-1$ in the missing entries (\autoref{fig:exp3-scenario}\refsubfigure{C}).
Both models are trained on a low budget of 8{,}192 simulations, and the mean flow is evaluated with $S{=}1$.

\paragraph{Posterior accuracy and sampling cost.}
Both methods converge on every dataset without divergent transitions.
The conditional mean flow with the masked transformer has a lower NRMSE to the generating coefficients than the normalizing flow, its posterior draws reproduce the observed $y_t$ more closely under the learned likelihood and the simulator (\autoref{tab:exp3}), and its ESS per network evaluation is six times higher.
However, each evaluation of the transformer is computationally more costly, such that the normalizing flow finishes sampling $22\times$ sooner, despite requiring more leapfrog steps inside NUTS.

\paragraph{Posterior predictive.}
Simulations from the learned likelihoods at posterior draws reveal the problem of sentinel values (\autoref{fig:exp3_prediction}).
The predictive bands of the mean flow contract around the observed data at every observed time point.
In contrast, the normalizing flow frequently generates the sentinel value $-1$ also at observed time points, which is reflected in a higher predictive RMSE~(\autoref{tab:exp3}).

In summary, the flexible architecture of mean flows allows a masked transformer to marginalize missing observations.
This yields a lower posterior error than a normalizing flow with sentinel values, but at a higher cost per density evaluation.

\subsection{More Posterior Draws Are Better Than More Flow Steps}
\label{sec:exp4}

\begin{figure*}[t]
    \centering
    \includegraphics[width=\textwidth]{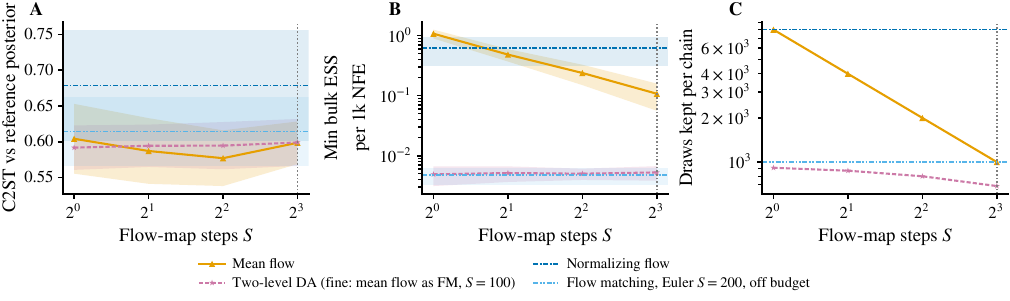}
    \caption{\emph{Budget-matched posterior accuracy on g-and-k.}
    (\textbf{A})~C2ST against the exact posterior over mean flow steps $S$, at the shared budget of Appendix~\ref{app:samplers}.
    (\textbf{B})~Minimum bulk ESS per 1{,}000 network evaluations.
    (\textbf{C})~Draws kept per chain under the shared budget.
    Lines show the mean over 10 independent datasets, bands the standard deviation; the delayed acceptance MCMC line is indexed by its coarse step count, and horizontal lines mark the normalizing flow and the flow matching model with 200 Euler steps outside the budget.}
    \label{fig:exp2-budget}
\end{figure*}

\paragraph{Setup and baseline.}
The g-and-k distribution is a standard benchmark with an intractable likelihood, defined through its quantile function \citep{rayner2002numerical, drovandi2011likelihood-free}.
We use a bivariate version with a Gaussian copula and parameters $(g, k, \rho)$, and datasets of $N{=}50$ observations (Appendix~\ref{app:task-gnk}).
The conditional mean flow learns the likelihood of a single observation, and we sample the posterior with NUTS.
To assess whether a computational budget should be spent on more mean flow steps or on more posterior draws, we fix the product of requested draws per chain and network evaluations per density (\autoref{app:samplers}).
Under this budget, we compare mean flow step counts $S \in \{1, 2, 4, 8\}$ with a normalizing flow.
In addition, we evaluate delayed acceptance MCMC, which screens proposals with the mean flow at $S_c$ steps and accepts them against a 100-step Tsit5 evaluation of the velocity head of the same network (\autoref{sec:sampling} and \autoref{app:samplers}).
A separately trained flow matching model integrated with 200 Euler steps serves as a baseline outside the budget, with 28 times more network evaluations.

\paragraph{Steps against draws.}
All mean flow settings converge on every dataset and reach a similar accuracy, with the lowest mean C2ST at $S{=}4$, although these differences are smaller than the spread across datasets (\autoref{fig:exp2-budget}\refsubfigure{A}, \autoref{tab:exp2}).
Fewer steps permit longer chains, and the minimum bulk ESS per network evaluation drops by an order of magnitude from $S{=}1$ to $S{=}8$ (\autoref{fig:exp2-budget}\refsubfigure{B--C}).
Hence, beyond $S{=}4$, a more accurate likelihood does not compensate for fewer posterior draws, and in practice, we recommend $S{=}2$ for this benchmark.
Under the same budget, \textit{the normalizing flow has a higher C2ST than every mean flow setting} and requires more than four times the wall time of the mean flow ($S{=}2$; \autoref{fig:runtimes}\refsubfigure{B}).
Regular flow matching also has a higher C2ST despite requiring more evaluations, and the same mean flow is about $50\times$ faster.

\paragraph{Delayed acceptance MCMC.}
Screening proposals with a coarse mean flow and accepting them against the fine evaluation of the same network gives an accuracy comparable to NUTS (\autoref{tab:exp2}).
With the coarse acceptance rate adapted to $0.3$, the fine level accepts 80 to 85\% of the evaluated proposals, rising with $S_c$.
Delayed acceptance MCMC is about 26--37 times faster than the flow matching baseline, but slower than NUTS with the mean flow likelihoods and yields a lower ESS.

In summary, a few mean flow steps with more posterior draws give a lower posterior error than both the normalizing flow and the flow matching baseline, at lower wall time.
Delayed acceptance MCMC with the same network gives comparable accuracy without gradients.

\section{DISCUSSION}
\label{sec:discussion}

Neural density estimators for SBI have so far traded flexibility for fast density evaluation. Conditional mean flows relax this trade-off by allowing the step count to be chosen after training. In low-dimensional benchmarks, one or two steps matched the accuracy of flow matching while reducing MCMC time by up to two orders of magnitude. Few steps also yielded accurate posterior samples, and under a fixed MCMC budget, more draws were more valuable than more steps.

\textbf{Limitations.} The main limitation is the cost of density evaluation in high dimensions.
Each step requires its full Jacobian and a log-determinant, whose cost grows cubically with the dimension, and differentiating the log-determinant for NUTS involves second-order derivatives.
Together with the higher cost of each transformer evaluation, this made MCMC in the renewal application slower than with the normalizing flow despite using fewer network evaluations.
In addition, the cycle loss encourages but does not guarantee invertibility, so the learned densities may not exactly be normalized.
Finally, the coverage of every method, including flow matching with 200 steps, decreased at large trial counts, consistent with error accumulation.

\textbf{Outlook.} Beyond MCMC, fast mean flow densities can support self-consistency objectives \citep{schmitt2024leveraging} and, together with prior and likelihood evaluations, evidence estimation \citep{kucharsky2026improving}.
Since these methods also rely on repeated density evaluation, our work could make flexible flow-based estimators practical in any SBI setting that requires densities.

\clearpage

\bibliographystyle{abbrvnat}
\begingroup\raggedright
\let\v\oldv
\let\c\oldc
\bibliography{references}
\endgroup

\clearpage
\appendix
\onecolumn
\renewcommand{\thefigure}{S\arabic{figure}}
\setcounter{figure}{0}
\renewcommand{\thetable}{S\arabic{table}}
\setcounter{table}{0}

\section{Proofs}
\label{app:proofs}

\subsection{Conditional Mean Flows}
\label{app:conditional}

Conditioning reduces the main population objective to an unconditional mean flow objective for each conditional target, as in conditional diffusion and flow matching \citep{vincent2011connection, batzolis2021conditional, arruda2025diffusion}.
We write $\Psi^{\x}_{s\to t}$ for the ODE flow of $\v^*$ and $\u^*$ for its average velocity, defined by $\Psi^{\x}_{s\to t}(\z)=\z+(t-s)\u^*(\z,s,t,\x)$, with diagonal value $\u^*(\z,s,s,\x)=\v^*(\z,s,\x)$.

\paragraph{Assumptions.}
\begin{enumerate}[label=(A\arabic*)]
    \item \emph{Noise.} $\epsilonb\sim\normal(\0,\I)$ is independent of $(\z_0,\x)$, and $\E\|\z_0\|^2<\infty$.
    \item \emph{Time pairs.} $(s,t)$ is independent of $(\z_0,\x,\epsilonb)$, and its distribution has an absolutely continuous component whose density is strictly positive on $(0,1)^2$.
    \item \emph{Regularity.} For $p(\x)$-almost every $\x$, the conditional target $p(\z_0\given\x)$ has a density, and $\v^*(\cdot,\cdot,\x)$ is continuously differentiable in state and time and globally Lipschitz in the state, uniformly over $[0,1]$.
    \item \emph{Realizability.} The admissible pairs $(\u_{\psib},\hat{\v}_{\psib})$ are jointly measurable in $\x$, continuously differentiable in state and time, and have finite population loss, and they contain the exact pair $(\u^*,\v^*)$ simultaneously for all conditions.
\end{enumerate}
(A3) gives a global continuously differentiable flow through both endpoints.
For targets with compact support or discontinuous densities, such as the uniform priors in two moons, $\v^*$ is in general not Lipschitz near $t{=}0$.
For such targets, the argument below applies on $[\delta, 1]$ for every $\delta > 0$ and, for schedules starting at $t_0=\delta$, yields the density of the slightly smoothed target $\z_\delta = (1-\delta)\z_0 + \delta\epsilonb$, which converges to the target as $\delta \to 0$.
In practice, we do not truncate the schedule and keep $t_0 = 0$, since $s = 0$ is almost surely never sampled, and the network's value at $s = 0$ is the continuous extension of the fitted values at small $s$.

{\renewcommand{\theproposition}{\ref{prop:conditional}}\addtocounter{proposition}{-1}
\begin{proposition}[Conditional mean flows]
Assume (A1)--(A4), and let $\loss_{\mathrm{MF}}$ be unweighted ($w{=}1$).
Then every population minimizer satisfies, for $p(\x)$-almost every $\x$,
\begin{equation*}
    \hat{\v}_{\psib}(\z,s,\x)=\v^*(\z,s,\x)
    \quad\text{and}\quad
    \u_{\psib}(\z,s,t,\x)=\u^*(\z,s,t,\x)
    \qquad\text{for all } \z\in\R^d,\ s,t\in[0,1].
\end{equation*}
Consequently, for every schedule $0=t_0<\dots<t_S=1$, the composed map $T$ equals $\Psi^{\x}_{0\to1}$, a diffeomorphism that transports $p(\z_0\given\x)$ to $\normal(\0,\I)$, and \eqref{eq:composed-density} equals $\log p(\z\given\x)$.
\end{proposition}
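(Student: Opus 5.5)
The plan is to show that the unweighted loss splits into a nonnegative excess over a constant floor, and that the exact pair $(\u^*,\v^*)$ attains this floor. Then every minimizer must have zero excess, and zero excess pins down $\hat{\v}_{\psib}$ and $\u_{\psib}$.

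\textbf{Velocity head.} First condition on $(\x,s)$. By (A1), the second term of $\loss_{\mathrm{MF}}$ is a standard regression onto $\epsilonb-\z_0$. The usual bias--variance decomposition gives
\begin{equation*}
\E\|\hat{\v}_{\psib}-(\epsilonb-\z_0)\|^2=\E\|\hat{\v}_{\psib}(\z_s,s,\x)-\v^*(\z_s,s,\x)\|^2+\E\|\v^*-(\epsilonb-\z_0)\|^2,
\end{equation*}
where the last term does not depend on $\psib$. For $s\in(0,1)$, $\z_s$ has a strictly positive density on $\R^d$, being a Gaussian convolution. By (A2), $s$ has positive density on $(0,1)$. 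Hence zero excess gives $\hat{\v}_{\psib}=\v^*$ almost everywhere in $(\z,s)$ for $p(\x)$-a.e.\ $\x$. Continuity (A3)--(A4) upgrades this to all $\z$ and all $s\in[0,1]$, including the endpoints by continuous extension.

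\textbf{Mean-flow head.} In the first term, $\mathbf{V}_{\psib}$ is a function of $(\z_s,s,t,\x)$ only. For the unweighted population loss I treat the stop-gradient as inert at the fixed point. The same decomposition then gives excess $\E\|\mathbf{V}_{\psib}-\v^*(\z_s,s,\x)\|^2$. Realizability (A4) ensures the exact pair attains zero excess in both terms simultaneously. This is because $\u^*$ satisfies the MeanFlow identity \eqref{eq:meanflow-identity} with tangent $\v^*$. So a minimizer has $\hat{\v}_{\psib}=\v^*$, and by (A2) and continuity,
\begin{equation*}
\u_{\psib}(\z,s,t,\x)-(t-s)\big(\v^*\cdot\nabla_{\z}+\partial_s\big)\u_{\psib}(\z,s,t,\x)=\v^*(\z,s,\x)
\end{equation*}
for all $\z$ and $s,t$.

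\textbf{Uniqueness of the mean flow (main obstacle).} The key step is showing that this PDE has $\u^*$ as its only $C^1$ solution. Set $F(\z,s)=\z+(t-s)\u_{\psib}(\z,s,t,\x)$ at fixed $t$. Along a characteristic $\z_\tau=\Psi^{\x}_{s\to\tau}(\z)$, which exists globally by (A3), the PDE is exactly $\tfrac{\mathrm d}{\mathrm d\tau}F(\z_\tau,\tau)=0$. Indeed,
\begin{equation*}
\tfrac{\mathrm d}{\mathrm d\tau}F=\v^*-\u_{\psib}+(t-\tau)\tfrac{\mathrm d}{\mathrm d\tau}\u_{\psib}=0 .
\end{equation*}
So $F$ is constant along characteristics. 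Evaluating at $\tau=t$ gives $F(\z,s)=\z_t=\Psi^{\x}_{s\to t}(\z)$, hence $\u_{\psib}=\u^*$ for $s\neq t$. The diagonal $s=t$ is covered directly by the PDE, which reduces there to $\u_{\psib}(\z,s,s,\x)=\v^*$.

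The delicate points are the measure-zero sets and the conditioning, and I would handle them as follows:
\begin{itemize}[noitemsep]
\item Equality holds only $p(\x)$-a.e., with continuity in $(\z,s,t)$ at fixed $\x$ from (A4).
\item The stop-gradient must be justified. Under the population criterion, the first-order condition of the loss with sg coincides with the pointwise regression minimizer, which makes the fixed point argument valid.
\end{itemize}

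\textbf{Consequences.} Each step map satisfies $G_i=\Psi^{\x}_{t_{i-1}\to t_i}$. The semigroup property of the flow then gives $T=\Psi^{\x}_{0\to1}$. This is a $C^1$ diffeomorphism by (A3), with inverse $\Psi^{\x}_{1\to0}$. The flow of $\v^*$ transports the law of $\z_0\given\x$ to that of $\z_1=\epsilonb\sim\normal(\0,\I)$; this is the marginal-preserving property of the conditional flow matching velocity, via the continuity equation. The chain rule and the change of variables \eqref{eq:nf} then show that \eqref{eq:composed-density} equals $\log p(\z\given\x)$.
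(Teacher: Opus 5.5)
Your proposal is correct and follows essentially the same route as the paper's proof. Both arguments decompose the unweighted loss into a $\psib$-independent floor plus the excesses $\E\|\mathbf V_{\psib}-\v^*\|^2$ and $\tfrac1d\E\|\hat{\v}_{\psib}-\v^*\|^2$, use (A4) and the MeanFlow identity to show the exact pair attains the floor, extend by full Gaussian support, the time-pair density and continuity, and then show that $\flowmap(\z_\tau;\tau,t,\x)$ is constant along trajectories of $\v^*$ with terminal value $\z_t$, before composing flows and transporting the base density. Two small remarks: your first-order-condition justification of the stop-gradient is unnecessary, since $\operatorname{sg}$ does not change the value of $\loss_{\mathrm{MF}}$ and so plays no role for population minimizers; and you obtain the diagonal $s=t$ directly from the PDE where the paper uses continuity, which is a harmless difference.
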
}

\begin{proof}[Proof of Proposition~\ref{prop:conditional}]
Let $\mathbf Y=\epsilonb-\z_0$ and $\mathbf m=\v^*(\z_s,s,\x)$.
By (A1) and (A2), $\E[\mathbf Y\given\z_s,s,t,\x]=\mathbf m$, and (A1) ensures that all terms are finite.
Both predictions in the main loss are functions of these inputs, so the conditional-expectation decomposition gives
\begin{equation*}
    \loss_{\mathrm{MF}}
    =\big(1+\tfrac{1}{d}\big)\E\|\mathbf Y-\mathbf m\|^2
     +\E\|\mathbf V_{\psib}-\mathbf m\|^2
     +\tfrac{1}{d}\E\|\hat{\v}_{\psib}-\mathbf m\|^2.
\end{equation*}
By (A4) and the MeanFlow identity, the exact pair $(\u^*,\v^*)$ makes the last two terms zero.
Every minimizer therefore satisfies $\hat{\v}_{\psib}=\mathbf V_{\psib}=\v^*$ almost everywhere under the training distribution.
For almost every condition, the Gaussian interpolation has full support at interior times.
Together with the time-pair density and continuity, this extends the equalities to all states and times.

Fix such a condition and a trajectory $\z_\tau=\Psi^{\x}_{s\to\tau}(\z)$.
Since $\hat{\v}_{\psib}=\v^*$, the total derivative in $\mathbf V_{\psib}$ is taken along this trajectory, and for fixed $t$,
\begin{equation*}
    \frac{\mathrm d}{\mathrm d\tau}\flowmap(\z_\tau;\tau,t,\x)
    =\v^*(\z_\tau,\tau,\x)-\mathbf V_{\psib}(\z_\tau,\tau,t,\x)=\0.
\end{equation*}
Since $\flowmap(\z_t;t,t,\x)=\z_t$, we obtain $\flowmap=\Psi^{\x}$ and hence $\u_{\psib}=\u^*$, with the diagonal identity following by continuity.
Since $\z_1=\epsilonb$, the flow of $\v^*$ transports $p(\z_0\given\x)$ to $\normal(\0,\I)$ under (A3) \citep{lipman2023flow, liu2022flow}.
Flow composition gives $T=\Psi^{\x}_{0\to1}$ for every schedule, and change of variables gives \eqref{eq:composed-density}.
\end{proof}

The exact flow map satisfies $\Psi^{\x}_{t\to s}\circ\Psi^{\x}_{s\to t}=\mathrm{id}$ for every condition, including swapped ones, so the exact pair also attains the minimum $\loss_{\mathrm{cyc}}=0$.
Hence, adding $\lambda\,\loss_{\mathrm{cyc}}$ does not change the set of population minimizers.
For $p > 0$, the weight $w$ depends on the regression target $\mathbf Y$ through the residual, and the decomposition above no longer applies.
The adaptive weight thus acts as a robust loss that down-weights large residuals \citep{geng2025mean}, and in general changes the learned velocity, but empirically seems to stabilize training (see \autoref{tab:hyperparameters}).

\subsection{Marginalization by Masking}
\label{app:marginal}

We show that the masked objective of \autoref{sec:backbone} targets the marginal distribution of the retained target coordinates.
For a set $\mathcal{O} \subseteq \{1, \dots, d\}$, we write $\z_{\mathcal{O}}$ for the retained coordinates of $\z$, and $\u_{\psib,\mathcal{O}}$ and $\hat{\v}_{\psib,\mathcal{O}}$ for the outputs of the network on $\mathcal{O}$.

{\renewcommand{\theproposition}{\ref{prop:marginal}}\addtocounter{proposition}{-1}
\begin{proposition}[Marginalization by masking]
Let the mask $\mathcal O\subseteq\{1,\dots,d\}$ be nonempty with law $\pi$, independent of $(\z_0,\x,\epsilonb,s,t)$, let the network outputs on $\mathcal O$ depend on the target only through $\z_{\mathcal O}$, and let the unweighted loss be restricted to the coordinates in $\mathcal O$.
Assume (A1)--(A3) for each marginal target $\z_{0,\mathcal O}$ with $\pi(\mathcal O)>0$, and (A4) simultaneously for all such masks.
Then every population minimizer satisfies, for every $\mathcal O$ with $\pi(\mathcal O)>0$ and $p(\x)$-almost every $\x$, $\hat{\v}_{\psib,\mathcal O}=\v^*_{\mathcal O}$ and $\u_{\psib,\mathcal O}=\u^*_{\mathcal O}$, where $\u^*_{\mathcal O}$ is the average velocity of
\begin{equation*}
    \v^*_{\mathcal O}(\z_{\mathcal O},s,\x)
    =\E[\epsilonb_{\mathcal O}-\z_{0,\mathcal O}
          \given\z_{s,\mathcal O}=\z_{\mathcal O},\x].
\end{equation*}
Consequently, \eqref{eq:composed-density}, with the base density and Jacobians evaluated in $|\mathcal O|$ dimensions, equals $\log p(\z_{\mathcal O}\given\x)$ for every schedule.
\end{proposition}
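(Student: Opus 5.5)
The plan is to reduce the masked objective, mask by mask, to the unmasked setting of Proposition~\ref{prop:conditional} applied to the marginal target $\z_{0,\mathcal O}$. Since $\pi$ is independent of $(\z_0,\x,\epsilonb,s,t)$, the population loss decomposes as
\begin{equation*}
    \loss_{\mathrm{MF}}^{\mathrm{mask}}(\psib)=\sum_{\mathcal O:\,\pi(\mathcal O)>0}\pi(\mathcal O)\,\loss_{\mathcal O}(\psib),
\end{equation*}
where $\loss_{\mathcal O}$ is the unweighted mean flow loss restricted to the coordinates in $\mathcal O$. Masks with $\pi(\mathcal O)=0$ contribute nothing (countably many masks, finite $d$).

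Fix such an $\mathcal O$. The restricted regression target is $\mathbf Y_{\mathcal O}=\epsilonb_{\mathcal O}-\z_{0,\mathcal O}$. The interpolation is coordinatewise, so $\z_{s,\mathcal O}=(1-s)\z_{0,\mathcal O}+s\,\epsilonb_{\mathcal O}$ with $\epsilonb_{\mathcal O}\sim\normal(\0,\I_{|\mathcal O|})$ independent of $(\z_{0,\mathcal O},\x)$. This is exactly the flow matching construction for the marginal target.

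The key observation is that the network outputs on $\mathcal O$ are measurable functions of $(\z_{s,\mathcal O},s,t,\x)$ only. This holds because of the assumed masking property. So the conditional-expectation decomposition in the proof of Proposition~\ref{prop:conditional} applies with the conditioning $\sigma$-field generated by $(\z_{s,\mathcal O},s,t,\x)$, not $(\z_s,s,t,\x)$. The relevant regression function is then $\E[\mathbf Y_{\mathcal O}\given\z_{s,\mathcal O},s,t,\x]=\v^*_{\mathcal O}(\z_{s,\mathcal O},s,\x)$, by (A1)--(A2).

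The point needing care is the tangent in $\mathbf V_{\psib,\mathcal O}$. The Jacobian-vector product on $\mathcal O$ uses the tangent $\hat{\v}_{\psib}$. Because $\u_{\psib,\mathcal O}$ does not depend on $\z_{\mathcal O^c}$, the total derivative only involves $\hat{\v}_{\psib,\mathcal O}\cdot\nabla_{\z_{\mathcal O}}+\partial_s$, so it is a closed $|\mathcal O|$-dimensional object. With this, $\loss_{\mathcal O}$ is literally the unconditional-in-form loss of Proposition~\ref{prop:conditional} for the pair $(\z_{0,\mathcal O},\x)$.

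Next, I use (A4) applied simultaneously to all masks. The exact family $(\u^*_{\mathcal O},\v^*_{\mathcal O})_{\mathcal O}$ is admissible by a single network, so it attains the minimum of every $\loss_{\mathcal O}$ at once. Hence any minimizer of the weighted sum must minimize each $\loss_{\mathcal O}$ with $\pi(\mathcal O)>0$; otherwise replacing it by the exact family strictly lowers the total.

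From here I invoke the argument of Proposition~\ref{prop:conditional} verbatim in dimension $|\mathcal O|$, under (A1)--(A3) for the marginal target. This gives $\hat{\v}_{\psib,\mathcal O}=\v^*_{\mathcal O}$ almost everywhere, extended to all states and times by full Gaussian support and continuity. It also gives $\u_{\psib,\mathcal O}=\u^*_{\mathcal O}$ through the trajectory-constancy argument for $\flowmap$. Finally, $T=\Psi^{\x}_{0\to1}$ transports $p(\z_{0,\mathcal O}\given\x)$ to $\normal(\0,\I_{|\mathcal O|})$, and \eqref{eq:composed-density} in $|\mathcal O|$ dimensions is $\log p(\z_{\mathcal O}\given\x)$.

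The main obstacle is the step claiming the per-mask minimizations do not conflict. A single network shares parameters across masks, so one must rely on simultaneous realizability (A4) rather than on separate optimization. I would state this explicitly as the reason the minimizers decouple.
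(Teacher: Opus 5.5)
Your proposal is correct and follows the paper's proof. Both decompose the masked loss into a $\pi$-weighted sum of per-mask mean flow objectives for the marginal targets $\z_{0,\mathcal O}$, use simultaneous realizability (A4) so that every minimizer attains each summand's lower bound, and then apply Proposition~\ref{prop:conditional} in $|\mathcal O|$ dimensions. Your remarks on the conditioning $\sigma$-field and on the Jacobian-vector product involving only the coordinates in $\mathcal O$ spell out details the paper leaves implicit; they do not change the route.
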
}

\begin{proof}[Proof of Proposition~\ref{prop:marginal}]
For a fixed mask, independence leaves the simulation law unchanged, and the coordinate-wise interpolation is
\begin{equation*}
    \z_{s,\mathcal O}=(1-s)\z_{0,\mathcal O}+s\epsilonb_{\mathcal O},
    \qquad\epsilonb_{\mathcal O}\sim\normal(\0,\I_{|\mathcal O|}).
\end{equation*}
The masked main loss is therefore a positively weighted sum of the conditional mean flow objectives for the marginal targets.
By the simultaneous realizability in (A4), the exact pairs attain every summand's lower bound, so every minimizer attains each one.
Proposition~\ref{prop:conditional} applied to $\z_{0,\mathcal O}$ gives the result.
\end{proof}

Additional attention restrictions are covered only when they preserve the ability to represent $\v^*_{\mathcal O}$ and its average velocity.
The independence of the synthetic training mask holds by construction and does not impose a missingness mechanism on the observed data.
The marginal density uses only the observed coordinates.
For $p > 0$, the residual and hence $w$ depend only on the retained coordinates, so the reduction to the marginal targets still holds, but each summand then has the weighted minimizer discussed in Appendix~\ref{app:conditional}.

\section{Evaluation Metrics}
\label{app:metrics}

All experiments use 10 independent datasets, and we report the mean and standard deviation of the following metrics over datasets.

\paragraph{Classifier two-sample test (C2ST).}
The C2ST compares two distributions through a classification task \citep{lopez-paz2017revisiting}.
If the approximate and the reference posterior coincide, no classifier can tell their draws apart better than chance, whereas any difference between them can be exploited by a sufficiently flexible classifier.
Hence, the held-out accuracy $\hat{a}$ of a trained classifier measures how distinguishable the two distributions are: $\hat{a} = 0.5$ indicates indistinguishable and $\hat{a} = 1$ perfectly separable draws.
We label approximate draws with $1$ and reference draws with $0$, subsample both sets without replacement to the same size, which is fixed within each experiment across all methods and step counts, standardize the pooled draws, and train a classifier $c(\thetab) \in (0, 1)$ with the binary cross-entropy.
The classifier is an MLP with two residual hidden layers of width $2^{\lceil \log_2 (10D) \rceil}$ for $D$-dimensional draws, Mish activations, dropout of $0.1$ and a sigmoid output unit.
We train it with AdamW (learning rate $5{\times}10^{-4}$ with linear warmup and cosine decay, weight decay $5{\times}10^{-3}$, gradient clipping at norm $1.5$) and batch size 128 for at most 100 epochs, with early stopping (patience 5) on 20\% of the training data.
The accuracy of the classifier $c(\thetab)$ on held-out draws is averaged over 5-fold cross-validation.

\paragraph{Grid density metrics.}
On two moons, we evaluate the exact posterior $p$ and each learned density $q$ on a $512{\times}512$ midpoint grid over the prior support with cell area $\Delta A$.
We report the total mass $M = \sum_j q_j \Delta A$ of the learned density and compare the shapes of both densities after normalizing them to cell probabilities $\tilde{p}_j$ and $\tilde{q}_j$ on the grid,
\begin{equation*}
    \mathrm{KL} = \sum_j \tilde{p}_j \log \frac{\tilde{p}_j}{\tilde{q}_j}, \qquad
    H = \Big(\tfrac{1}{2} \sum_j \big(\sqrt{\tilde{p}_j} - \sqrt{\tilde{q}_j}\big)^2\Big)^{1/2}.
\end{equation*}
The Hellinger distance $H \in [0, 1]$ remains finite when the learned density vanishes where the exact posterior has mass, whereas the Kullback-Leibler (KL) divergence becomes infinite.

\paragraph{Normalized RMSE and coverage.}
For posterior draws $\hat{\thetab}^{(l)}$, $l = 1, \dots, L$, and the generating parameter $\thetab^*$ of dimension $D$, we compute
\begin{equation*}
    \mathrm{NRMSE} = \bigg(\frac{1}{LD} \sum_{l=1}^{L} \sum_{j=1}^{D} \Big(\frac{\hat{\theta}_j^{(l)} - \theta_j^*}{r_j}\Big)^2\bigg)^{1/2},
\end{equation*}
where $r_j$ is the prior range of parameter $j$.
Coverage is the fraction of parameters whose generating value lies within the central 95\% credible interval.

\paragraph{Posterior contraction.}
We define posterior contraction as the relative reduction in variance from prior to posterior,
\begin{equation*}
    \mathrm{Contraction} = 1 - \frac{1}{D} \sum_{j=1}^{D} \frac{\Var_{\mathrm{post}}(\theta_j)}{\Var_{\mathrm{prior}}(\theta_j)},
\end{equation*}
which measures how much the data reduce the prior uncertainty, but not whether the posterior is correct.

\paragraph{TARP.}
TARP estimates the expected coverage of credible regions around random reference points \citep{lemos2023sampling-based} directly from the trained estimator, without MCMC or reference posteriors.
We report the area between the TARP curve and the diagonal (ATC; 0 is calibrated) and the $p$-value of the Kolmogorov-Smirnov test, using 2{,}000 validation pairs split into 10 folds of 200 pairs with 200 draws each (\autoref{tab:tarp}).
On two moons, TARP assesses the learned posterior; for the other experiments, it assesses whether the learned likelihood generates calibrated data.

\begin{table}[ht]
    \caption{\emph{TARP calibration of every trained estimator.}
    Area between the TARP curve and the diagonal (ATC; 0 is calibrated, positive is overdispersed), mean (standard deviation) over 10 folds of 200 prior-predictive validation pairs (2{,}000 in total) with 200 draws each.
    Two moons calibrates the posterior; g-and-k, the renewal model and the DDM calibrate the likelihood.
    $\dagger$ marks methods whose Kolmogorov-Smirnov $p$-value is below 0.01; the ATC closest to zero per experiment is in bold.}
    \label{tab:tarp}
    \centering
    \small
    \begin{tabular}{lll}
        \toprule
        \textbf{Experiment} & \textbf{Method} & \textbf{ATC} \\
        \midrule
        Two moons & Normalizing flow & $-0.008$ (0.019) \\
        & Mean flow, $S{=}1$ / $S{=}2$ / $S{=}32$ & $-0.001$ (0.010) / $+0.007$ (0.011) / $+0.008$ (0.011) \\
        & Consistency model, $S{=}1$ / $S{=}2$ / $S{=}32$ & $+0.102$ (0.011)$^\dagger$ / $+0.113$ (0.011) / $+0.001$ (0.014) \\
        & Mean flow ($\v$-head), Euler $S{=}1$ / $S{=}2$ / $S{=}32$ & $-0.179$ (0.013)$^\dagger$ / $-0.134$ (0.015) / $-0.007$ (0.012) \\
        & Flow matching, Euler $S{=}1$ / $S{=}32$ & $-0.176$ (0.011)$^\dagger$ / $-0.004$ (0.014) \\
        & Mean flow ($\v$-head) / flow matching, adaptive & $-0.007$ (0.012) / $\mathbf{+0.000}$ (0.011) \\
        \midrule
        g-and-k & Mean flow, $S{=}1$ / $S{=}8$ & $-0.004$ (0.012) / $-0.009$ (0.011) \\
        & Normalizing flow & $\mathbf{+0.001}$ (0.011) \\
        & Flow matching, Euler $S{=}200$ & $+0.004$ (0.007) \\
        \midrule
        Renewal & Mean flow, masked transformer & $\mathbf{-0.025}$ (0.013) \\
        & Normalizing flow, sentinel & $+0.130$ (0.011)$^\dagger$ \\
        \midrule
        DDM & Mean flow, $S{=}1$ / $S{=}8$ & $\mathbf{-0.003}$ (0.009) / $-0.006$ (0.013) \\
        & Flow matching, Euler $S{=}200$ & $\mathbf{-0.003}$ (0.016) \\
        \bottomrule
    \end{tabular}
\end{table}

\paragraph{Posterior predictive checks.}
For the DDM, we simulate a replicate dataset for each of 200 posterior draws and compare the observed choice proportion, the conditional response-time quantiles (10, 30, 50, 70 and 90\%) and the timeout rate $T$ with their replicate distribution.
For each summary, we compute the discrepancy $z = \big(T(\y_{\mathrm{obs}}) - \bar{T}_{\mathrm{rep}}\big)/\mathrm{sd}(T_{\mathrm{rep}})$ and the predictive $p$-value $P\big(T(\y_{\mathrm{rep}}) \ge T(\y_{\mathrm{obs}})\big)$, and report the largest $|z|$ across summaries.

\paragraph{Convergence diagnostics.}
We compute split-$\hat{R}$ and bulk ESS following \citet{vehtari2021rank-normalization}.
A run passes our convergence criteria if $\max \hat{R} \le 1.05$, the minimum bulk ESS is at least 100 and at least 500 draws are kept.
Reported means and standard deviations include runs that fail these criteria.

\section{Architectures, Training and Hyperparameter Search}
\label{app:training}

All networks and diagnostics are implemented in \texttt{BayesFlow} 2.0.14 \citep{radev2023bayesflow, kuehmichel2026bayesflow}, where we added the conditional mean flow as an additional inference network.

\paragraph{Architectures.}
\begin{itemize}
    \item \textbf{Mean flow:} An MLP with 5 hidden layers of width 256.
    The condition $\x$ is embedded by a linear layer and concatenated with the target $\z$ at the input, and, together with a Fourier embedding with scale $0.3$ of the time pair $(s,s-t)$, it modulates every residual block \citep{perez2018film}.
    The auxiliary velocity head shares all but the last 2 layers.
    \item \textbf{Masked transformer:} A transformer \citep{peebles2023scalable} with 5 blocks of width 128 and four attention heads.
    It tokenizes each observation and parameter and receives the time pair $(s, s-t)$ through adaptive layer normalization.
    During training, each observation of each simulation is dropped independently with probability 0.3, redrawn for every batch. Dropped tokens are excluded from attention and loss.
    A causal mask additionally restricts each observation token to parameter and observation tokens up to its own time step.
    The velocity head branches from the last 2 blocks and uses the same masks.
    Each observation and parameter token receives a learned embedding of its coordinate index, which for observations is the time step, and a Fourier embedding is used only for the time pair.
    \item \textbf{Consistency model:} A continuous-time consistency model \citep{lu2025simplifying} with an MLP of 6 hidden layers of width 256.
    \item \textbf{Flow matching:} An MLP with the same depth and width as the mean flow, without the velocity head.
    \item \textbf{Normalizing flow:} A spline coupling flow \citep{durkan2019neural} with 5 coupling layers, each with 2 hidden layers of width 256, and with 2 coupling layers, each with 2 hidden layers of width 128, for the renewal application.
\end{itemize}

\paragraph{Training.}
We trained all networks offline on standardized parameters and observations with a batch size of 256, using AdamW with weight decay $5{\times}10^{-3}$, gradient clipping at norm $1.5$, and a learning rate warmed up linearly over the first 5\% of steps from $5{\times}10^{-5}$ to $5{\times}10^{-4}$ and then decayed to zero with a cosine schedule.
We used 50{,}176 simulations for two moons and the DDM, 100{,}000 for g-and-k, and 8{,}192 for the renewal model to compare architectures under limited data, with 1{,}024 further simulations for validation.
The conditional mean flow, the consistency model, and the flow matching model were trained for 1{,}000 epochs, and the normalizing flow for 200 epochs with early stopping (patience 10).

\paragraph{Hyperparameter search.}
We selected the training hyperparameters of the conditional mean flow with a search in \texttt{Optuna} \citep{akiba2019optuna} on two benchmarks: two moons and a three-dimensional variant of the Gaussian linear task with uniform prior.
\autoref{tab:hyperparameters} lists the search space and the selected values, which all experiments use.
The training times are $s = u_s^{1+\alpha}$ and $t = u_t^{1+\alpha}$ with $u_s, u_t \sim \uniform(0, 1)$, and for $0$ unshared layers the velocity head reads the velocity from the boundary case $\u_{\psib}(\z, s, s, \x)$.
Each trial trained one network per benchmark for 250 epochs of 100 freshly simulated batches of size 256 and evaluated it with $S{=}4$ steps on 256 held-out datasets with 500 posterior draws each.
The score of a benchmark was the sum of an accuracy term, the calibration error, and the grid Hellinger distance to the exact posterior.
The accuracy term is $2\,(\mathrm{C2ST} - 0.5)$ on two moons and the NRMSE on the Gaussian task, and the calibration error is the median absolute difference between the nominal and empirical coverage of central credible intervals at 20 levels.
The C2ST was averaged over 15 and the Hellinger distance over 3 of the held-out datasets.
The search minimized the sum of both benchmark scores with the multivariate tree-structured Parzen estimator of \texttt{Optuna} \citep{akiba2019optuna} on 200 trials.

\begin{table}[ht]
    \caption{\emph{Hyperparameters of the conditional mean flow.}
    Search space of the \texttt{Optuna} search and the selected configuration used in all experiments.}
    \label{tab:hyperparameters}
    \centering
    \small
    \begin{tabular}{lll}
        \toprule
        \textbf{Hyperparameter} & \textbf{Search space} & \textbf{Selected} \\
        \midrule
        Cycle weight $\lambda$  (log) & $[1, 100]$  & $2.5$ \\
        Probability of replacing the condition in $\loss_{\mathrm{cyc}}$ & $[0, 0.8]$ & $0.6$ \\
        Fraction of time pairs with $t{=}s$ & $[0.2, 0.8]$ & $0.7$ \\
        Exponent $p$ of the adaptive weight & $\{0, 0.25, 0.5, 0.75, 1\}$ & $0.25$ \\
        Offset $c$ of the adaptive weight (log) & $[10^{-5}, 1]$ & $0.1$ \\
        Scale of the Fourier time embedding (log) & $[0.02, 30]$ & $0.3$ \\
        Exponent $\alpha$ of the training times & $[-0.5, 4]$ & $0.5$ \\
        Final layers not shared by the velocity head & $\{0, \dots, 5\}$ & $2$ \\
        Adaptive weight on the velocity head & \{yes, no\} & no \\
        \bottomrule
    \end{tabular}
\end{table}

\paragraph{Sampling schedule.}
For two moons, we tuned the sampling schedule of the consistency model and the mean flow separately for each step count, using the C2ST on 2{,}000 draws for 3 held-out observations.
For the consistency model, we selected the discretization exponent from $\rho \in \{1, 2, 3, 3.5, 5, 7, 10\}$.
For the mean flow, we optimized the EDM schedule $(\rho, \sigma_{\min})$ with 7 \texttt{Optuna} trials over $\rho \in [1, 10]$ and $\sigma_{\min} \in [10^{-6}, 10^{-1}]$ (log-uniform), the same number of candidates as for the consistency model.
\autoref{tab:schedule} lists the selected schedules.
For $S{=}1$, the schedule is fixed.
For all other experiments, we set $\rho{=}7$ and $\sigma_{\min}{=}0.001$.

\begin{table}[ht]
    \caption{\emph{Sampling schedules selected on two moons.}
    Discretization exponent $\rho$ of the consistency model and EDM schedule $(\rho, \sigma_{\min})$ of the conditional mean flow for each step count $S$, tuned on held-out observations; at $S{=}1$, the schedule has no free parameter.
    To show how much the selection matters, we report the tuning C2ST (3 held-out observations, 2{,}000 draws each) of the selected and of the worst of the 7 candidates.}
    \label{tab:schedule}
    \centering
    \small
    \begin{tabular}{llccccc}
        \toprule
        & & $S{=}2$ & $S{=}4$ & $S{=}8$ & $S{=}16$ & $S{=}32$ \\
        \midrule
        Consistency model & $\rho$ & 1 & 2 & 3 & 3.5 & 3.5 \\
        & C2ST, selected & 0.918 & 0.542 & 0.555 & 0.539 & 0.535 \\
        & C2ST, worst & 0.973 & 0.975 & 0.946 & 0.700 & 0.666 \\
        \midrule
        Mean flow & $\rho$ & 4.54 & 3.82 & 4.12 & 8.34 & 7.42 \\
        & $\sigma_{\min}$ & $1.1{\times}10^{-4}$ & $4.9{\times}10^{-2}$ & $5.7{\times}10^{-5}$ & $7.0{\times}10^{-2}$ & $2.0{\times}10^{-3}$ \\
        & C2ST, selected & 0.502 & 0.504 & 0.509 & 0.505 & 0.507 \\
        & C2ST, worst & 0.513 & 0.519 & 0.519 & 0.517 & 0.521 \\
        \bottomrule
    \end{tabular}
\end{table}

\paragraph{Code.}
The code and the configurations of all experiments are available under the MIT license at \url{https://anonymous.4open.science/r/MeanFlow-AISTAST}. 
The code is written in Python 3.14, and all networks are \texttt{Keras} 3.15.1 models \citep{chollet2015keras} on the \texttt{JAX} 0.11.1 backend \citep{jax2018github}, both under the Apache 2.0 license.
Besides \texttt{BayesFlow} (MIT), we build on \texttt{PyMC} 5.28.5 (Apache 2.0) \citep{salvatier2016probabilistic} and \texttt{nutpie} 0.16.11 (MIT) \citep{seyboldt2026preconditioning} for MCMC, \texttt{ArviZ} 0.23.4 (Apache 2.0) \citep{martin2026arviZ} for the convergence diagnostics, and \texttt{Optuna} 4.9.0 (MIT) \citep{akiba2019optuna} for the hyperparameter searches.

\paragraph{Compute environment.}
We ran all analyses on a computing cluster using eight CPU cores and one NVIDIA A100 GPU with 40 GB of VRAM.
The computing cluster uses an AMD EPYC 7F72 CPU with a core clock speed of up to 3.2 GHz and 50 GB of RAM.

\section{MCMC Samplers}
\label{app:samplers}

\paragraph{NUTS.}
We sample with the NUTS implementation of \texttt{nutpie} \citep{seyboldt2026preconditioning} through \texttt{PyMC} \citep{salvatier2016probabilistic} on the \texttt{JAX} backend \citep{jax2018github}, with 4 chains, a diagonal mass matrix, a maximum tree depth of 10 and the default target acceptance rate of \texttt{nutpie}.
We use the default of 400 warmup iterations, except for g-and-k, where the warmup is 25\% of the requested draws of each method.

\paragraph{Budget on g-and-k.}
A method whose density costs $K$ network evaluations requests $8{,}000/K$ draws per chain, so that the product of requested draws and network evaluations per density is constant across methods.
The normalizing flow counts as $K{=}1$.
For delayed acceptance MCMC, the requested iterations are $96{,}000/(J S_c + 100)$ per chain, which accounts for the $J{=}5$ coarse evaluations and one fine evaluation per iteration.
The budget is set on the requested draws and ignores the number of leapfrog steps per draw, which differs between methods, so the realized number of network evaluations varies.
We count one network evaluation per forward pass of the network for one pair $(\z, \x)$: a normalizing flow costs one evaluation per density, a mean flow with $S$ steps costs $S$, a velocity field integrated with $S$ Euler steps costs $S$, and a fixed-step Tsit5 integration costs 7 per step.
Jacobians, divergences and gradients are obtained by automatic differentiation on top of these passes and are not counted; their cost is reflected in the wall times.

\paragraph{Delayed acceptance MCMC.}
We implement two-level delayed acceptance MCMC \citep{christen2005markov} with the subchains of \citet{lykkegaard2023multilevel}.
For any coarse posterior $\pi_{\mathrm{c}}$, its invariant distribution is the fine posterior $\pi_{\mathrm{f}}$ (\autoref{sec:sampling}).
Both levels use the same trained network: the coarse level evaluates the mean flow with $S_c \in \{1, 2, 4, 8\}$ steps, and the fine level integrates its velocity head with 100 Tsit5 steps of fixed size and the exact divergence in \eqref{eq:cnf}.
Each iteration runs a subchain of $J{=}5$ random-walk Metropolis steps on $\pi_{\mathrm{c}}$, following \citet{lykkegaard2023multilevel}, and proposes its end point to the fine level with the acceptance probability \eqref{eq:delayed-acceptance}.
The random walk draws Gaussian increments with a step size per parameter, initialized at 10\% of the prior width, and a global scale.
A proposal outside the prior's support is rejected without evaluating the network.
If the subchain does not move, the proposal equals the current state and is accepted without evaluating the fine likelihood.
Adaptation takes 25\% of the requested iterations and is discarded.
During adaptation, a Robbins-Monro update steers the global scale towards a coarse acceptance rate of $0.3$, and in its second half, the step sizes are updated to the marginal standard deviations of the states visited in that half, after which the proposal is fixed.
We run 4 chains started from prior draws.

\section{Per-Experiment Details}
\label{app:tasks}

\subsection{Two Moons}
\label{app:task-two-moons}

The prior is $\thetab \sim \uniform(-1, 1)^2$, and the forward model is
\begin{align*}
    a &\sim \uniform(-\nicefrac{\pi}{2}, \nicefrac{\pi}{2}), \qquad r \sim \normal(0.1, 0.01^2), \\
    \y &= \big(r\cos a + 0.25,\; r \sin a\big) + \big({-}|\theta_1 + \theta_2| / \sqrt{2},\; (-\theta_1 + \theta_2)/\sqrt{2}\big),
\end{align*}
whose fold makes the posterior invariant under $(\theta_1, \theta_2) \mapsto (-\theta_2, -\theta_1)$, which gives two crescent-shaped modes \citep{greenberg2019automatic}.
Since the density is available in closed form and the forward model can be inverted, we draw 32{,}000 exact i.i.d.\ reference samples per observation.
The first dataset uses the canonical observation $\y = (0, 0)$, and the other nine use prior predictive draws.

\subsection{g-and-k}
\label{app:task-gnk}

The g-and-k distribution is defined by its quantile function in normal-score space \citep{rayner2002numerical},
\begin{equation*}
    W(z; g, k) = \big(1 + 0.8 \tanh(g z / 2)\big)\,(1 + z^2)^k\, z, \qquad z \sim \normal(0, 1),
\end{equation*}
and has no closed-form density.
Since $W$ is strictly increasing for $k \ge 0$, we invert it by bisection, which gives the exact log-density by the change of variables.
We couple two coordinates with a Gaussian copula of correlation $\rho$ in normal-score space, such that both marginals are g-and-k distributed.
The priors are $g \sim \uniform(-1, 1)$, $k \sim \uniform(0, 0.5)$ and $\rho \sim \uniform(-0.9, 0.9)$, and each dataset consists of $N{=}50$ observations simulated from a parameter drawn from the prior.
We obtain 32{,}000 reference draws per dataset from an adaptive random-walk Metropolis ensemble on the exact log-posterior, started around the generating parameter with a jitter of 2\% of the prior width (8 chains, 5{,}000 adaptation iterations, 40{,}000 draws thinned by 10, $\hat{R} \le 1.05$ and ESS $\ge 400$).

\subsection{Epidemic Renewal}
\label{app:task-renewal}

On a grid of $T = 128$ steps, of which the first $n_0 = 5$ are seeded at incidence $I_1 = \dots = I_{n_0} = 10$ and not observed, the renewal model is, for $t > n_0$,
\begin{align*}
    \Lambda_t &= \textstyle\sum_{s=1}^{15} w_s\, I_{t-s}, \qquad
    I_t = R_t\, \Lambda_t\, \exp\!\big(\sigma_p \varepsilon_t - \tfrac{1}{2}\sigma_p^2\big), \qquad
    y_t = \log I_t + \sigma_o \eta_t ,
\end{align*}
with $\varepsilon_t, \eta_t \sim \normal(0,1)$, $\sigma_p = \sigma_o = 0.05$, and $w$ the Gamma density with mean 5.0 and standard deviation 2.5 \citep{fraser2007estimating, cori2013new}.
This is the renewal equation of \citet{cori2013new} with a discretized generation interval truncated at 15 steps.
The observation is $(y_{n_0+1}, \dots, y_T)$, so $d = 123$.
The parameter is the whitened coefficient vector $\z \in \R^{128}$ with $\log R = L\z$, where $LL^\top$ is a squared-exponential kernel with lengthscale 2.5, marginal standard deviation 0.4 and a $10^{-6}$ ridge \citep{rasmussen2005gaussian}.
Hence, the prior is exactly $\z \sim \normal(\0, \I)$.
Since $L$ is lower triangular, $z_s$ enters $\log R_t$ only for $t \ge s$, and since the recursion runs forward in time, $y_t$ does not depend on $z_s$ for $s > t$.
The causal attention mask of the masked transformer (\autoref{app:training}) encodes exactly this structure.
Observations are standardized separately at each time step.

We withhold each of the observed steps independently with probability 0.3 and draw a new pattern for every dataset.
Since this mechanism is independent of the process and the parameters, the data are missing completely at random, and the posterior only requires the marginal likelihood $p(\y_{\mathcal{O}} \given \z)$ of the observed entries $\mathcal{O}$ \citep{little2019statistical}.
The masked transformer trains on complete simulations with masks drawn independently per simulation and observation at the same rate and thereby learns this marginal for every pattern.
The normalizing flow instead models vectors with the sentinel $-1$ in the missing entries according to~\citet{wang2024missing}.

\subsection{Cognitive Model}
\label{app:task-ddm}

A trial is a Wiener process with drift $\nu$ that starts at $\beta a$ (with fixed relative starting point $\beta = 0.5$) and terminates at the first hit of $0$ or $a$.
The response time adds a non-decision time $t_0$, and trials time out after \SI{10}{\second}.
The priors are $\nu \sim \uniform(-3, 3)$, $a \sim \uniform(0.8, 2)$ and $t_0 \sim \uniform(0.1, 0.4)$, and we simulate with Euler-Maruyama at $\mathrm{d}t = 10^{-4}$.
The network observes the signed response time $y = (2c - 1)\,t$, with $c = 1$ for the upper boundary, which is bijective with the pair of response time and choice.
Since the non-decision time cannot exceed the fastest observed response time, we sample it below this bound, which tightens as trials are added.
Within each dataset, trial counts are nested, and every setting keeps $4\times1{,}000$ NUTS draws after the default warmup.
For the first trial count $N{=}4$, the chains start at the center of the prior support with the default jitter of \texttt{PyMC}.
The chains for the next trial count start at the posterior median of the preceding trial count, clipped to the current support of $t_0$, with independent jitter per chain.

\section{Additional Figures and Tables}
\label{app:figures}

\begin{figure}[ht]
    \centering
    \includegraphics[width=\textwidth]{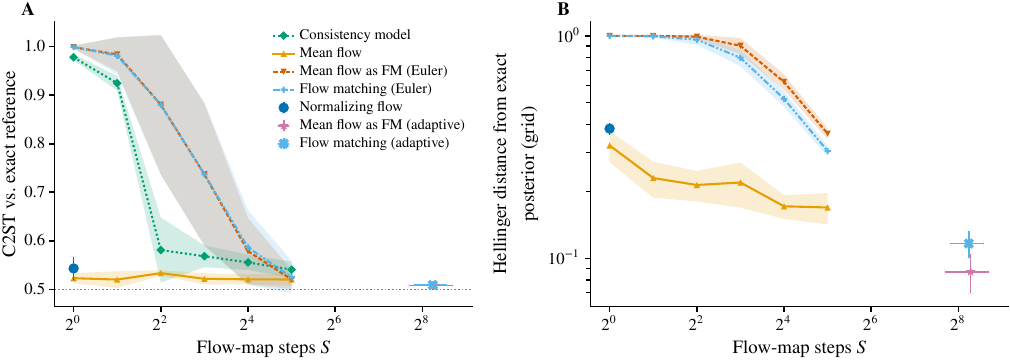}
    \caption{\emph{Sample and density quality against step count on two moons.}
    (\textbf{A})~C2ST against the reference posterior over mean flow steps $S$ ($0.5$ indicates indistinguishable samples).
    (\textbf{B})~Hellinger distance of the learned density from the exact posterior on a $512{\times}512$ grid.
    Lines show the mean over 10 independent datasets, bands the standard deviation.}
    \label{fig:exp1-steps}
\end{figure}

\begin{figure}[ht]
    \centering
    \includegraphics[width=\textwidth,height=0.86\textheight,keepaspectratio]{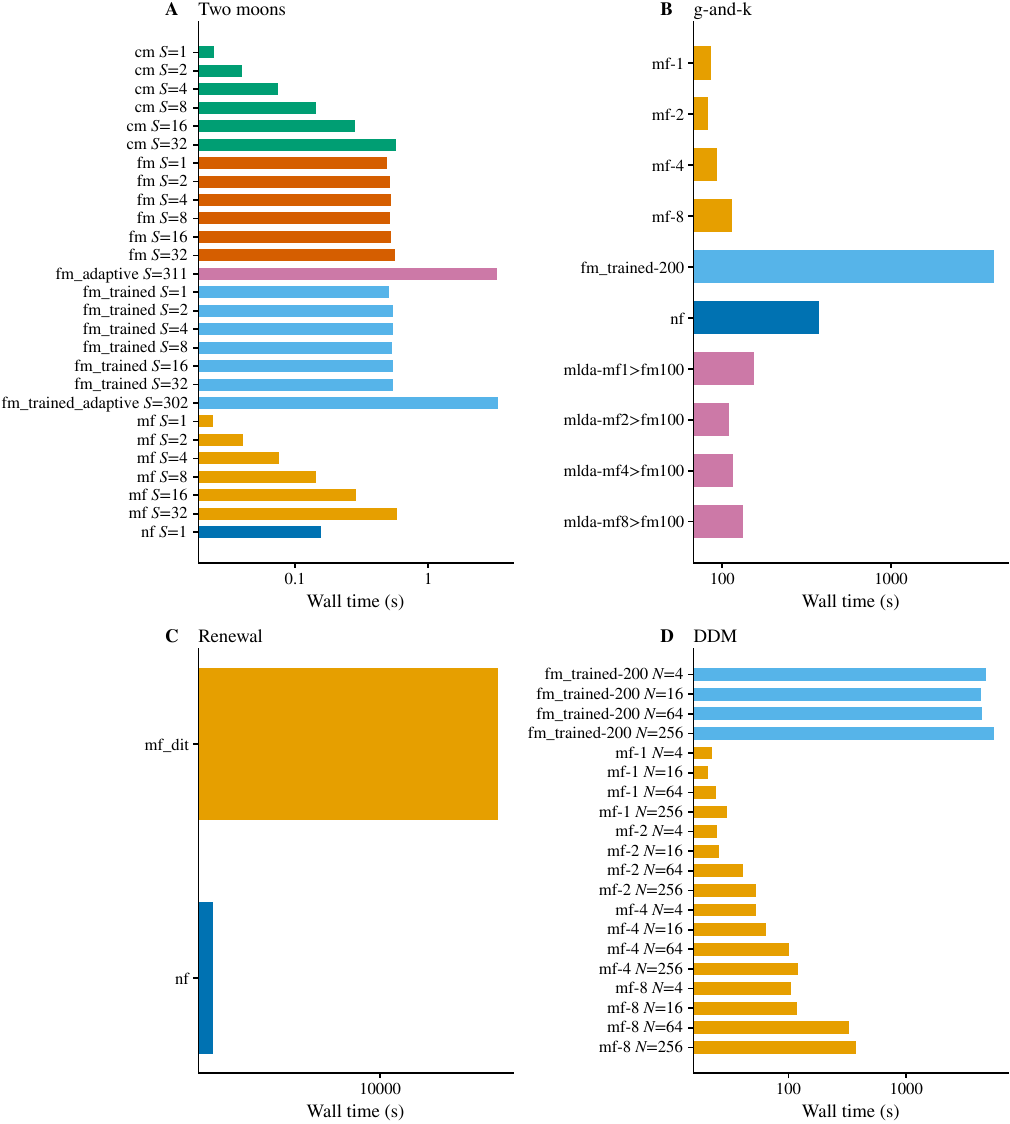}
    \caption{\emph{Wall time of every method across experiments.}
    Warm wall times by method (log scale), mean over 10 datasets, for
    (\textbf{A})~two-moons sampling (5{,}000 draws),
    (\textbf{B})~g-and-k MCMC under the shared budget, with the flow matching baseline outside it,
    (\textbf{C})~renewal MCMC at equal draws and
    (\textbf{D})~DDM MCMC per trial count.}
    \label{fig:runtimes}
\end{figure}

\begin{table}[ht]
    \caption{\emph{Density metrics on two moons.}
    Total mass, KL divergence and Hellinger distance of each learned posterior density against the exact one, on a $512{\times}512$ grid over the prior; mean (standard deviation) over 10 independent datasets.
    The best value per column is in bold, for the mass the one closest to 1.}
    \label{tab:exp1-density}
    \centering
    \small
    \begin{tabular}{lllll}
        \toprule
        \textbf{Method} & \textbf{Steps} & \textbf{Mass} & \textbf{KL} & \textbf{Hellinger} \\
        \midrule
        Normalizing flow & $S{=}1$ & 0.977 (0.055) & 0.446 (0.073) & 0.383 (0.023) \\
        \midrule
        Mean flow & $S{=}1$ & 1.066 (0.017) & 0.347 (0.137) & 0.322 (0.052) \\
        Mean flow & $S{=}2$ & 1.045 (0.030) & 0.179 (0.077) & 0.230 (0.042)  \\
        Mean flow & $S{=}8$ & 1.085 (0.093) & 0.165 (0.053) & 0.219 (0.050)  \\
        Mean flow & $S{=}32$ & $\mathbf{1.003}$ (0.025) & 0.160 (0.052) & 0.169 (0.027) \\
        \midrule
        Mean flow ($\v$-head) & Euler, $S{=}1$ & $7.2{\times}10^{39}$ $(2.3{\times}10^{40})$ & 73.2 (19.0) & 1.000 (0.000)  \\
        Mean flow ($\v$-head) & Euler, $S{=}8$ & 57.3 (45.5) & 4.10 (1.20) & 0.902 (0.076)  \\
        Mean flow ($\v$-head) & Euler, $S{=}32$ & 1.783 (0.077) & 0.385 (0.033) & 0.363 (0.020)  \\
        Mean flow ($\v$-head) & adaptive, $S{=}311$ & 0.993 (0.016) & $\mathbf{0.031}$ (0.013) & $\mathbf{0.087}$ (0.017) \\
        \midrule
        Flow matching & Euler, $S{=}1$ & $5.5{\times}10^{23}$ $(1.7{\times}10^{24})$ & 37.8 (9.5) & 1.000 (0.001)  \\
        Flow matching & Euler, $S{=}32$ & 1.825 (0.073) & 0.271 (0.021) & 0.304 (0.017)  \\
        Flow matching & adaptive, $S{=}302$ & 0.991 (0.023) & 0.048 (0.019) & 0.117 (0.016)  \\
        \bottomrule
    \end{tabular}
\end{table}

\begin{table}[ht]
    \caption{\emph{Composing the DDM trial likelihood, per method and trial count.}
    NRMSE to the generating parameter, coverage of the 95\% central interval, minimum bulk ESS, divergent transitions, and worst predictive discrepancy; mean (standard deviation) over 10 independent datasets.}
    \label{tab:exp4}
    \centering
    \small
    \begin{tabular}{lrllrll}
        \toprule
        \textbf{Method} & $N$ & \textbf{NRMSE} & \textbf{Cov.\ 95\%} & \textbf{Min ESS} & \textbf{Div.} & $\max_k |z_k|$ \\
        \midrule
        Mean flow $S{=}1$ & 4 & 0.268 (0.079) & 0.90 (0.16) & 1{,}396 (312) & 1.0 (1.9) & 0.73 (0.14) \\
        Mean flow $S{=}1$ & 16 & 0.161 (0.033) & 0.90 (0.16) & 1{,}174 (378) & 6.6 (11.5) & 0.90 (0.30) \\
        Mean flow $S{=}1$ & 64 & 0.102 (0.028) & 0.90 (0.16) & 1{,}205 (186) & 2.4 (4.0) & 1.13 (0.47) \\
        Mean flow $S{=}1$ & 256 & 0.070 (0.019) & 0.60 (0.31) & 1{,}146 (335) & 7.8 (21.7) & 2.15 (1.03) \\
        \midrule
        Mean flow $S{=}2$ & 256 & 0.061 (0.016) & 0.73 (0.21) & 1{,}094 (335) & 0.6 (1.6) & 1.93 (0.77)  \\
        Mean flow $S{=}4$ & 256 & 0.066 (0.018) & 0.77 (0.27) & 503 (620) & 50.7 (50.1) & 2.02 (0.77)  \\
        Mean flow $S{=}8$ & 256 & 0.064 (0.023) & 0.70 (0.29) & 513 (504) & 31.8 (29.0) & 1.99 (0.83) \\
        \midrule
        Flow matching, Euler $S{=}200$ & 4 & 0.267 (0.081) & 0.90 (0.16) & 1{,}279 (228) & 0.3 (0.7) & 0.72 (0.13)  \\
        Flow matching, Euler $S{=}200$ & 16 & 0.151 (0.032) & 0.90 (0.16) & 1{,}285 (270) & 0.1 (0.3) & 0.88 (0.22)  \\
        Flow matching, Euler $S{=}200$ & 64 & 0.099 (0.031) & 0.87 (0.23) & 1{,}127 (228) & 0.0 (0.0) & 1.03 (0.57)  \\
        Flow matching, Euler $S{=}200$ & 256 & 0.067 (0.033) & 0.70 (0.33) & 1{,}275 (329) & 0.1 (0.3) & 1.79 (0.57) \\
        \bottomrule
    \end{tabular}
\end{table}

\begin{figure}[ht]
    \centering
    \includegraphics[width=\textwidth]{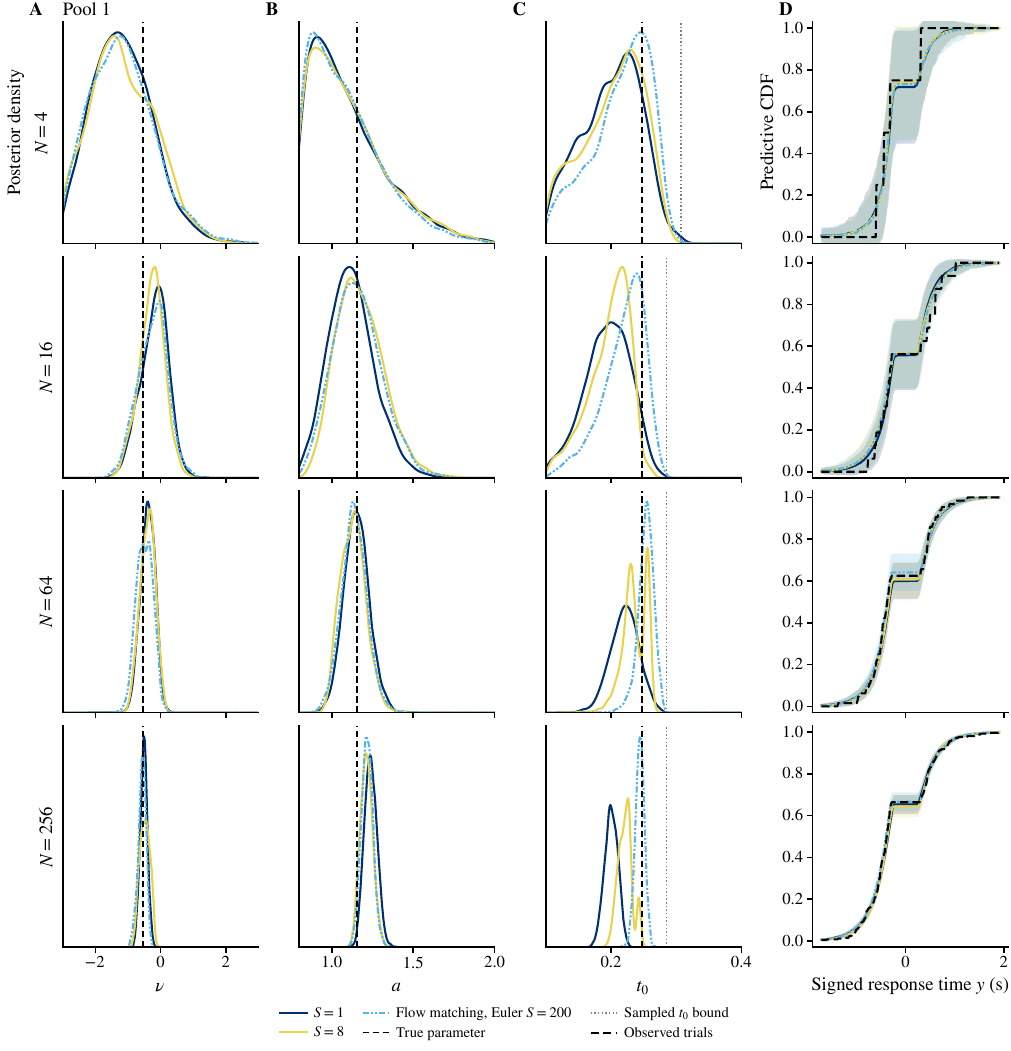}
    \caption{\emph{Composed DDM posteriors on one dataset, over the trial count.}
    Rows are $N \in \{4, 16, 64, 256\}$; columns are the marginals of (\textbf{A})~the drift $\nu$, (\textbf{B})~the boundary separation $a$ and (\textbf{C})~the non-decision time $t_0$ for $S{=}1$, $S{=}8$ and the 200-step baseline, with the generating parameter dashed and the sampled $t_0$ bound dotted.
    (\textbf{D})~Posterior predictive CDF of the signed response time (bands: mean $\pm$ one standard deviation over 200 replicates) against the observed trials.
    Shown for one of 10 datasets.}
    \label{fig:exp4-posteriors}
\end{figure}

\begin{figure}[ht]
    \centering
    \includegraphics[width=\textwidth]{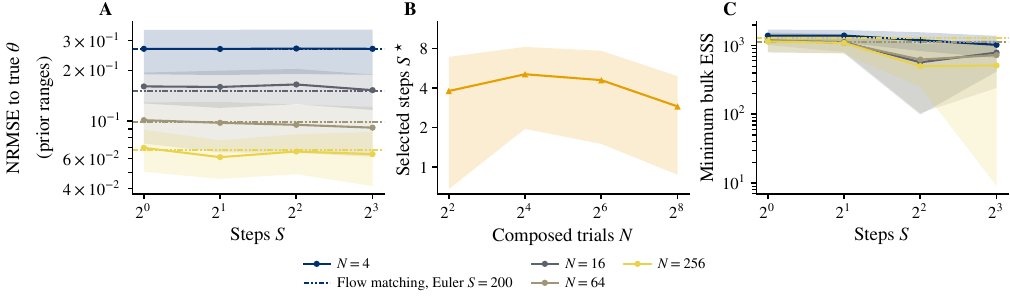}
    \caption{\emph{Effect of step count on DDM inference.}
    \textbf{A}~NRMSE over mean flow steps $S$, one line per trial count $N$, with the flow matching baseline with 200 Euler steps dashed.
    \textbf{B}~Step count selected within each dataset by NRMSE among converged settings, over the trial count $N$.
    \textbf{C}~Minimum bulk ESS over $S$, with the baseline dash-dotted.
    Lines show the mean over 10 independent datasets, bands the standard deviation.}
    \label{fig:exp4-steps}
\end{figure}

\begin{figure}[ht]
    \centering
    \includegraphics[width=\textwidth]{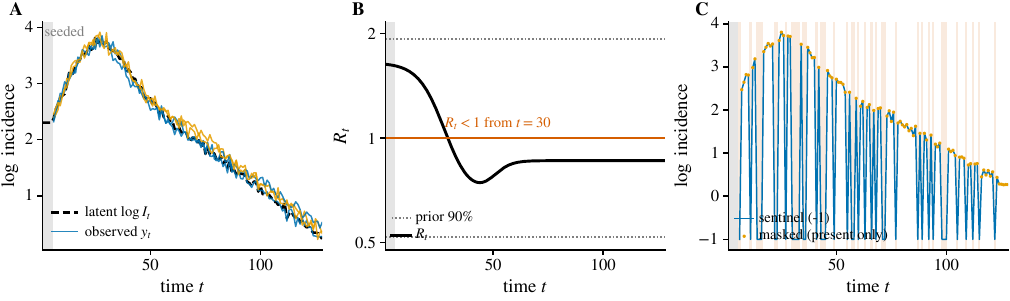}
    \caption{\emph{Renewal-model observations and what each method is shown of them.}
    (\textbf{A})~The observation: four log-incidence curves at one parameter, with the latent $\log I_t$ of the first behind them and the seeded steps shaded, giving 123 observed values on a 128-step grid.
    (\textbf{B})~The reproduction number that generated them, against the $R{=}1$ line and the prior's 90\% interval, for comparison with the generating trajectory.
    (\textbf{C})~The same curve under one 30\% missingness pattern, as each method receives it: the observed coordinates alone for the masked transformer, against the vector with sentinel values for the normalizing flow.}
    \label{fig:exp3-scenario}
\end{figure}


\begin{table}[ht]
    \caption{\emph{Budget-matched MCMC on g-and-k.}
    Every method except the flow matching baseline shares the same budget of requested draws times network evaluations per density. The realized totals in forward passes are $0.9$ to $2.0\times10^{7}$ for the NUTS arms and $1.0$ to $1.4\times10^{8}$ for delayed acceptance MCMC.
    Mean (standard deviation) over 10 independent datasets; the best minimum ESS and C2ST are in bold.}
    \label{tab:exp2}
    \centering
    \small
    \begin{tabular}{lrrrl}
        \toprule
        \textbf{Method} & \textbf{Steps} & \textbf{Samples} & \textbf{Min ESS} & \textbf{C2ST} \\
        \midrule
        Mean flow $S{=}1$ & 1 & 32{,}000 & $\mathbf{10{,}498}$ (1{,}552) & 0.604 (0.049) \\
        Mean flow $S{=}2$ & 2 & 16{,}000 & 4{,}856 (939) & 0.587 (0.046) \\
        Mean flow $S{=}4$ & 4 & 8{,}000 & 2{,}436 (708) & $\mathbf{0.577}$ (0.039) \\
        Mean flow $S{=}8$ & 8 & 4{,}000 & 1{,}143 (476) & 0.599 (0.030) \\
        \midrule
        Normalizing flow & 1 & 32{,}000 & 8{,}023 (3{,}169) & 0.679 (0.078) \\
        Flow matching, Euler $S{=}200$ & 200 & 4{,}000 & 1{,}270 (331) & 0.614 (0.048) \\
        \midrule
        Delayed acceptance MCMC, $S_c{=}1$ & 100 & 3{,}656 & 642 (246) & 0.592 (0.031) \\
        Delayed acceptance MCMC, $S_c{=}2$ & 100 & 3{,}492 & 649 (200) & 0.594 (0.030) \\
        Delayed acceptance MCMC, $S_c{=}4$ & 100 & 3{,}200 & 581 (133) & 0.595 (0.033) \\
        Delayed acceptance MCMC, $S_c{=}8$ & 100 & 2{,}744 & 544 (157) & 0.599 (0.033) \\
        \bottomrule
    \end{tabular}
\end{table}



\end{document}